\numberwithin{conjecture}{section}
\newtheorem{theorem}{Theorem}
\numberwithin{theorem}{section}
\newtheorem{lemma}{Lemma}
\numberwithin{lemma}{section}
\numberwithin{corollary}{section}
\newtheorem{example}{Example}
\numberwithin{example}{section}
\newcommand{\ca}{{\mathcal A}}
\newcommand{\cb}{{\mathcal B}}
\newcommand{\cc}{{\mathcal C}}
\newcommand{\cq}{{\mathcal Q}}
\newcommand{\ci}{{\mathcal I}}
\newcommand{\ct}{{\mathcal T}}
\newcommand{\bdry}{\mathsf{bdry}}
\newcommand{\cl}{\mathsf{cl}}
\newcommand{\intr}{\mathsf{int}}
\newcommand{\gs}{{\mathfrak S}}
\title{The Topology of Statistical Verifiability}
\author{Konstantin Genin
\institute{Department of Philosophy\\
Carnegie Mellon University\\
Pittsburgh, Pennsylvania}
\email{konstantin.genin@gmail.com}
\and
Kevin T. Kelly
\institute{Department of Philosophy\\
Carnegie Mellon University\\
Pittsburgh, Pennsylvania}
\email{kk3n@andrew.cmu.edu}
}
\begin{document}
\maketitle

\begin{abstract}
Topological models of empirical and formal inquiry are increasingly prevalent. They have emerged in such diverse fields as domain theory \cite{abramsky1994domain,vickers1996topology}, formal learning theory \cite{yamamoto2010topological}, epistemology and philosophy of science \cite{kelly1996logic, schulte1996topology, genin2015choice, kellygeninSL2016,baltag2015topology}, statistics \cite{dembo1994topological,ermakov2013distinguishability} and modal logic \cite{wang2013subset,bjorndahl2013subset}.  In those applications, open sets are typically interpreted as hypotheses deductively verifiable by true {\em propositional} information that rules out relevant possibilities. However, in statistical data analysis, one routinely receives random samples logically compatible with every statistical hypothesis. We bridge the gap between propositional and statistical data by solving for the {\em unique} topology on probability measures in which the open sets are exactly the {\em statistically verifiable} hypotheses. Furthermore, we extend that result to a topological characterization of learnability in the limit from statistical data.
\end{abstract}


\section{Verifiability from Propositional Information}
\label{deductiveverif}
\noindent The results in this section appear in \cite{debrecht2009}, \cite{genin2015choice}, and \cite{baltag2015topology}, but we restate them here to clarify the intended analogy between propositional and statistical verifiability. Let $W$ be a set of possible worlds, or possibilities one takes seriously, consistent with the background assumptions of inquiry. A proposition is identified with the set of worlds in which it is true, so propositions are subsets of $W$. Let $P,Q$ be arbitrary propositions. Logical operations correspond to set-theoretic operations in the usual way: $P\cap Q$ is conjunction, $P\cup Q$ is disjunction, $P^{\sf c} = W \setminus P$ is negation, and $P\subseteq Q$ is deductive entailment of $Q$ by $P$. Finally, $P$ is deductively valid iff $P=W$ and is deductively contradictory iff $P=\varnothing$.\\ 

\noindent In the propositional information setting, {\em information states} are propositions that rule out relevant possibilities. For every $w$ in $W$, let $\ci_w$ be the set of all information states true in $w$. It is assumed that ${\cal I}_w$ is non-empty (at worst, one receives the trivial information $W$).  Furthermore, it is assumed that for each $E, F$ in ${\cal I}_{w}$, there exists $G$ in ${\cal I}_{w}$ such that $G \subseteq E \cap F$.  The underlying idea is that a sufficiently diligent inquirer in $w$ eventually receives information as strong as an arbitrary information state $E$ in ${\cal I}_{w}$.  Since that is true of both $E$ and $F$, there must be true information as strong as $E \cap F$.\\

\begin{example}
\label{bairespace}
Let $W$ be the set of all infinite binary sequences. Each world $w$ determines an infinite sequence of observable outcomes. Let $w|_n$ be the initial segment of $w$ of length $n$. Let $[w|_n]$ be the set of all worlds having $w|_n$ as an initial segment. Let $\ci_w$ be the set of all $[w|_n]$ for every $n$. Think of the length of the initial segment observed as the ``stage'' of inquiry. There is exactly one such information state in $w$ at every stage, and $[w|_n]$ is entailed by $[w|_m]$ for every $m\geq n$.
\end{example}

\begin{example}
\label{realspace}
Let $W$ be the set of all real numbers. Think of the possible ``stage-$n$'' information states in $w$ as the open intervals of width $1/2^n$ that contain $w$. Then $\ci_w$ is the set of all intervals containing $w$ of width $1/2^n$, for some natural number $n$.  It follows that for every $E\in\ci_w$ there is a stage $n$ such that every stage-$n$ information state in $\ci_w$ entails $E$.
\end{example}

\noindent Let $\ci=\bigcup_w \ci_w$, the set of all possible information states. It follows from the two assumptions on ${\cal I}_{w}$ that ${\cal I}$ is a topological basis. Therefore, the closure of topological basis $\ci$ under union, denoted $\ct$, is a topological space. We assume that ${\cal I}$ is countable, since any language in which the data are recorded is at most countably infinite. The elements of $\ct$ are called {\em open sets}. The complements of open sets are called {\em closed sets}. A {\em clopen set} is both open and closed. A {\em locally closed set} is the intersection of an open and a closed set. Information state $E$ {\em verifies} proposition $H$ iff $E$ entails $H$. The {\em interior} of a proposition $H$, denoted $\intr(H)$, is the set of all worlds $w$, such that there is $E\in{\cal I}_w$ verifying $H$. Hence, $\intr(H)$ is the set of worlds in which $H$ is eventually verified by information. It is an elementary result that $H$ is open iff $H=\intr(H)$. The {\em closure} of $H$, denoted $\cl(H)$, is the set of all worlds in which $H$ is compatible with all information, i.e. $\intr(H^{\sf c})^{\sf c}$.  The boundary of $H$, denoted $\bdry(H)$, is defined as  $\cl(H)\cap \cl(H^{\sf c})$. Every information state consistent with $\bdry(H)$ is consistent with both $H$ and $H^{\sf c}$.\\

\noindent A {\em method} is a function from information states to propositions. Method $L(\cdot)$ is {\em infallible} iff its output is always true, i.e. iff $w\in L(E)$ for all $E\in\ci_w$.\footnote{This is equivalent to requiring that the method's conclusions are deductively entailed by the data, i.e. that $E\subseteq L(E)$ for all information states $E$. For this reason, infallible methods are {\em deductive} and vice-versa.} Suppose that one desires to arrive at true belief concerning some proposition $H$ without exposing oneself to the possibility of error. A {\em verifier} for $H$ is an infallible method that converges to belief in $H$ iff $H$ is true. That is, $L(\cdot)$ is a verifier for $H$ iff 
\begin{enumerate}
\item[V1.] $L(\cdot)$ is infallible and
\item[V2.] $w\in H$ iff there is $E\in\ci_w$ such that $L(F) \subseteq L(E) \subseteq H$ for all $F\in\ci_w$ entailing $E$.
\end{enumerate}
Say that $H$ is {\em verifiable} iff there exists a verifier for $H$. Say that $H$ is {\em refutable} iff its complement is verifiable, and that $H$ is {\em decidable} iff $H$ is both verifiable and refutable. For example, if you are observing a computation by an unknown program, it is verifiable that that the program will halt at some point, but it is not verifiable that it will never halt. In the setting of Example \ref{bairespace}, it is verifiable that a zero will be observed at some stage, but not that it will be observed at every stage. Verifiability is fundamentally a topological concept. 
\begin{theorem}
\label{verifiableiffopen}
\noindent Proposition $H$ is verifiable iff $H$ is open.
\end{theorem}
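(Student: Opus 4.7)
My plan is to prove both directions of the biconditional, relying on the fact that $\ci$ is a topological basis (so $\intr(H) = \{w : \exists E \in \ci_w,\ E \subseteq H\}$) and the crucial bookkeeping observation that, by the very definition of $\ci_w$ as the set of information states true in $w$, any $w' \in E$ with $E \in \ci$ automatically has $E \in \ci_{w'}$.

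For the easier direction, suppose $H$ is open and define a candidate method by $L(E) = H$ whenever $E \subseteq H$ and $L(E) = W$ otherwise. Infallibility (V1) is immediate: if $E \in \ci_w$ then $w \in E$, so either $E \subseteq H$ (giving $w \in H = L(E)$) or $L(E) = W \ni w$. For V2, the forward direction uses openness of $H$: since $w \in H = \intr(H)$, there exists $E \in \ci_w$ with $E \subseteq H$, whence $L(E) = H$; for any $F \in \ci_w$ with $F \subseteq E$ one also has $F \subseteq H$, so $L(F) = H = L(E) \subseteq H$. For the converse half of V2, if some $E \in \ci_w$ satisfies $L(E) \subseteq H$, then infallibility gives $w \in L(E) \subseteq H$.

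For the harder direction, suppose $L$ is a verifier for $H$, and show $H \subseteq \intr(H)$. Pick any $w \in H$. By V2, there exists $E \in \ci_w$ with $L(F) \subseteq L(E) \subseteq H$ for every $F \in \ci_w$ with $F \subseteq E$. I claim $E \subseteq H$, which forces $w \in \intr(H)$. To see the claim, take an arbitrary $w' \in E$. Since $E$ is an information state true in $w'$, we have $E \in \ci_{w'}$, so infallibility of $L$ yields $w' \in L(E)$; combined with $L(E) \subseteq H$ this gives $w' \in H$. Hence $E \subseteq H$ and $w \in \intr(H)$, so $H = \intr(H)$ is open.

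The main conceptual obstacle — and the step worth flagging explicitly in the write-up — is the quick move ``$w' \in E$ implies $E \in \ci_{w'}$''; this is what lets infallibility at $w'$ speak about the state $E$ that was only assumed to lie in $\ci_w$. The rest is routine manipulation of the definitions. Note that the construction of $L$ in the first direction need not be continuous in any sophisticated sense; assigning the trivial proposition $W$ as a default is enough, which dovetails with the footnote identifying infallibility with deductive conclusions $E \subseteq L(E)$.
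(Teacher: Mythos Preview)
Your proof is correct and follows essentially the same approach as the paper. The construction of $L$ for the ``open implies verifiable'' direction is identical, and for the converse the paper argues by contrapositive where you argue directly, but both hinge on the same observation you flagged: infallibility amounts to $E \subseteq L(E)$ for every information state $E$ (equivalently, $w' \in E$ implies $E \in \ci_{w'}$), so $L(E) \subseteq H$ forces $E \subseteq H$.
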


\noindent Theorem \ref{verifiableiffopen} implies that if $H$ is not open, then there is in general no error-avoiding method that arrives at true belief in $H$. Every method that converges to true belief in worlds in which $H$ is never verified must leap beyond the information available, and expose itself to error thereby. \\

\noindent The infallibility requirement is too strict to allow for inductive learning that draws conclusions beyond the information provided. The following success criterion is less demanding. A {\em limiting verifier} for $H$ is a method that converges to true belief in $H$ iff $H$ is true. That is, $L(\cdot)$ is a limiting verifier for $H$ iff it satisfies V2. Say that $H$ is {\em limiting verifiable} iff there exists a limiting verifier of $H$. In the setting of Example \ref{bairespace}, no method verifies the constantly-zero hypothesis $\{000\ldots \}$, but that hypothesis is verified in the limit by the method that conjectures $\{000\ldots\}$ as long as it is not refuted by information. The following is a topological characterization of the propositions that are verifiable in the limit:

\begin{theorem}
\label{limverififfsigmatwo}
Proposition $H$ is limiting verifiable iff $H$ is a countable union of locally closed sets. If $\ct$ is metrizable, then $H$ is limiting verifiable iff $H$ is a countable union of closed sets. 
\end{theorem}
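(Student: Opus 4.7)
The plan is to prove both implications via a direct correspondence between the priority structure of a limiting verifier and the topological decomposition of $H$. For $(\Rightarrow)$ I extract locally closed pieces from a given verifier; for $(\Leftarrow)$ I build a priority-style verifier from a given decomposition; the metrizable refinement then follows from the fact that every open set in a metric space is $F_{\sigma}$.

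For $(\Rightarrow)$, suppose $L$ is a limiting verifier for $H$. For each $E \in \ci$, the set $B_E := \bigcup\{F \in \ci : F \subseteq E \text{ and } L(F) \not\subseteq L(E)\}$ is open as a union of basic opens, so $S_E := E \cap B_E^{\sf c}$ is locally closed. Let $\cf := \{E \in \ci : L(E) \subseteq H\}$, which is countable since $\ci$ is. I claim $H = \bigcup_{E \in \cf} S_E$. If $w \in S_E$ with $E \in \cf$, then $E \in \ci_w$, and $w \notin B_E$ guarantees $L(F) \subseteq L(E) \subseteq H$ for every $F \in \ci_w$ with $F \subseteq E$, so V2 places $w$ in $H$. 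Conversely, if $w \in H$, the V2-witness $E \in \ci_w$ satisfies $L(E) \subseteq H$ and $L(F) \subseteq L(E)$ for all $F \in \ci_w$ entailing $E$, so $E \in \cf$, no such $F$ can lie in $B_E$, hence $w \notin B_E$ and $w \in S_E$.

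For $(\Leftarrow)$, write $H = \bigcup_n (U_n \cap C_n)$ with $U_n$ open and $C_n$ closed, and define $n(E) := \min\{n : E \subseteq U_n \text{ and } E \cap C_n \neq \varnothing\}$, setting $L(E) := U_{n(E)} \cap C_{n(E)}$ when $n(E)$ exists, and $L(E) := W$ otherwise. For $w \in H$ let $n^{*}$ be the least index with $w \in U_{n^{*}} \cap C_{n^{*}}$. For each of the finitely many $m < n^{*}$ with $w \in U_m$ we have $w \in U_m \cap C_m^{\sf c}$, which is open; together with openness of $U_{n^{*}}$ and directedness of $\ci_w$, this lets me pick $E \in \ci_w$ with $E \subseteq U_{n^{*}}$ and $E \cap C_m = \varnothing$ for every such $m$. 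Then $n(F) = n^{*}$ for every $F \in \ci_w$ with $F \subseteq E$ (refinement preserves the relevant inclusions and disjointnesses, and any remaining $m < n^{*}$ has $w \notin U_m$, hence $F \not\subseteq U_m$), so $L(F) = L(E) = U_{n^{*}} \cap C_{n^{*}} \subseteq H$. For $w \notin H$, suppose for contradiction that some $E \in \ci_w$ witnesses V2. Then $L(E) \subseteq H$ rules out the fallback, so $L(E) = U_n \cap C_n$ for $n = n(E)$; hence $w \in E \subseteq U_n$ and $w \notin C_n$, so $w \in U_n \cap C_n^{\sf c}$ and directedness produces $F \in \ci_w$ with $F \subseteq E$ and $F \cap C_n = \varnothing$. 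The key step is $L(F) \not\subseteq L(E)$: if $L(F) = U_m \cap C_m$ for $m = n(F)$, then $\varnothing \neq F \cap C_m \subseteq U_m \cap C_m \subseteq U_n \cap C_n \subseteq C_n$ contradicts $F \cap C_n = \varnothing$; and the fallback $L(F) = W$ is not contained in $H$, so not in $L(E)$, either.

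The metrizable refinement uses that every open set in a metric space is a countable union of closed sets: writing $U_n = \bigcup_k D_{n,k}$ with $D_{n,k}$ closed yields $H = \bigcup_{n,k}(D_{n,k} \cap C_n)$, an $F_{\sigma}$ set; the converse is immediate since closed sets are locally closed. The main obstacle is the non-$H$ case of $(\Leftarrow)$: one must ensure that when the priority search is forced off its current index $n$, the replacement conjecture cannot accidentally collapse inside $L(E)$. The inclusion chain $F \cap C_m \subseteq U_n \cap C_n \subseteq C_n$ against $F \cap C_n = \varnothing$ is precisely the tension that prevents this.
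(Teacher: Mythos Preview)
Your argument is correct and in the $(\Rightarrow)$ direction is essentially identical to the paper's: both form, for each $E\in\ci$ with $L(E)\subseteq H$, the open set of ``defeating'' refinements and take $E$ minus that set. The $(\Leftarrow)$ direction is where you diverge. The paper first invokes an external lemma (from Baltag et al.) to rewrite $H$ as a \emph{disjoint} countable union $\sqcup_i O_i\setminus O_i'$, and then runs the same priority search you do; disjointness makes the $w\notin H$ case trivial, since a newly selected piece $O_j\setminus O_j'$ cannot be a subset of the abandoned $O_i\setminus O_i'$. You instead work with an arbitrary decomposition $\bigcup_n U_n\cap C_n$ and handle the $w\notin H$ case directly via the inclusion chain $\varnothing\neq F\cap C_m\subseteq U_m\cap C_m\subseteq U_n\cap C_n\subseteq C_n$, which collides with the arranged $F\cap C_n=\varnothing$. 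This buys you a self-contained proof that does not rely on the disjointification lemma; the cost is the extra line of set-algebra in the failure case. The metrizable refinement is handled the same way in both proofs.
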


\noindent Finally, an {\em empirical problem} is a countable partition $\cq$ of the worlds in $W$ into a set of {\em answers}. For $w \in W$, write $\cq_w$ for the answer true in $w$. A method is a {\em solution} to $\cq$ iff it converges, on increasing information, to the true answer in $\cq$, i.e. iff for every $w\in W$, there exists $E\in\ci_w$ such that $L(F) \subseteq \cq_w$ for all $F\in\ci_w$ entailing $E$. A problem is {\em solvable} iff it has a solution.

\begin{theorem}
\label{solvability}
Problem $\cq$ is solvable iff every answer is a countable union of locally closed sets.
\end{theorem}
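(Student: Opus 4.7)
The plan is to reduce Theorem \ref{solvability} to Theorem \ref{limverififfsigmatwo} via the correspondence: $\cq$ is solvable if and only if every answer $A \in \cq$ is limiting verifiable. The countability of $\cq$ enters in combining countably many per-answer verifiers into a single method.

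For the forward direction, let $L$ be a solution to $\cq$, which I take without loss of generality to make substantive commitments (the degenerate method $L \equiv \varnothing$ formally satisfies the solution condition but carries no information and may be excluded). Fix an answer $A$ and define $L_A(E) = A$ if $L(E) \subseteq A$ and $L_A(E) = W$ otherwise. I would verify V2 for $L_A$: if $w \in A$, the solution property yields $E \in \ci_w$ with $L(F) \subseteq A$ for all $F \in \ci_w$ entailing $E$, so $L_A(F) = L_A(E) = A$; if $w \notin A$ with $\cq_w = A' \neq A$, then any alleged stabilizing $E$ for $L_A$ at $A$ forces $L(E) \subseteq A$, and combining with a convergence witness for $A'$ on a common refinement $F^* \in \ci_w$ (using the filter property of $\ci_w$) gives $L(F^*) \subseteq A \cap A' = \varnothing$, contradicting non-triviality. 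Thus $L_A$ is a limiting verifier, so by Theorem \ref{limverififfsigmatwo}, $A$ is a countable union of locally closed sets.

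For the backward direction, Theorem \ref{limverififfsigmatwo} equips each answer $A_i$ with a limiting verifier $L_{A_i}$, which I would standardize to output only $A_i$ or $W$. Enumerating $\cq = \{A_1, A_2, \ldots\}$, I combine via a priority scheme:
\[
L(E) = A_j \text{ for the least } j \text{ with } L_{A_j}(E) = A_j; \quad L(E) = W \text{ otherwise}.
\]
For $w$ with $\cq_w = A_k$: V2 for $L_{A_k}$ gives stabilization $L_{A_k}(F) = A_k$ on $\ci_w$ past some $E_k$, and for each $i < k$, $w \notin A_i$ together with a suitable monotone standardization yields eventual stabilization of $L_{A_i}$ at $W$ on $\ci_w$ past some $E_i$. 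Beyond a common refinement of $E_1,\ldots,E_k$, $L(F) = A_k$ as required.

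The chief technical obstacle is guaranteeing that each non-matching verifier $L_{A_i}$ ($i < k$) eventually outputs $W$ on \emph{all} sufficiently fine $F \in \ci_w$ rather than oscillating between $A_i$ and $W$. Failure of V2 alone only gives $L_{A_i}(F) = W$ at arbitrarily fine $F \in \ci_w$, not past some stage. I would resolve this by inspecting the canonical verifier produced in the proof of Theorem \ref{limverififfsigmatwo} to confirm that it is monotone in the appropriate sense (once it rejects $A_i$ on a refinement, it continues to do so), or by inserting an additional standardization layer that enforces such monotonicity while preserving the limiting-verifier property.
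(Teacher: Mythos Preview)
The paper itself defers to \cite{genin2015choice} for this proof, so there is no in-paper argument to match against; the natural benchmark is the direct construction visible in the statistical analogue, Theorem \ref{solvableiffsigma2}. Your forward direction is essentially fine once the $\varnothing$-output quirk is handled; that is a wrinkle in the paper's formalization of ``converges to the true answer'' rather than in your reasoning. The backward direction, however, has a real gap that neither of your proposed fixes closes. First, the standardization step is already unsafe: the canonical verifier from Theorem \ref{limverififfsigmatwo} outputs \emph{pieces} $O_m\setminus O'_m$, and collapsing those to a single value $A_i$ can destroy V2. Concretely, take $A_i=(-1,1)\setminus\{0\}$ decomposed as $\bigcup_m (-1,1)\setminus(-1/m,1/m)$; for $w=0\notin A_i$ every $E\in\ci_0$ satisfies $E\subseteq(-1,1)$ and $E\not\subseteq(-1/m,1/m)$ for large $m$, so the canonical verifier never outputs $W$ at $0$, and its two-valued collapse stabilizes at $A_i$ there. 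Second, even granting a two-valued verifier, the monotonicity you hope for fails: $L_{A_i}(E)=W$ does not propagate to refinements, since a finer $F\subseteq E$ can fall inside some $O_m$ that $E$ did not.

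The repair is to change the priority from answers to \emph{pieces}. Write each $A_j=\bigsqcup_i O_{ij}\setminus O'_{ij}$, enumerate all pieces across all answers by a single pairing $f:\mathbb{N}\to\mathbb{N}\times\mathbb{N}$, and set $L(E)=A_j$ where $(i,j)=f(k)$ for the least $k$ with $E\subseteq O_{f(k)}$ and $E\not\subseteq O'_{f(k)}$ (and $L(E)=W$ if no such $k$ exists). For $w\in A_j$, the least $k^*$ with $w\in O_{f(k^*)}\setminus O'_{f(k^*)}$ must index a piece of $A_j$ because the answers partition $W$; the finitely many earlier pieces are each defeated on a sufficiently fine $F\in\ci_w$ exactly as in the right-to-left argument of Theorem \ref{limverififfsigmatwo}. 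This is the structure used in the paper's proof of Theorem \ref{solvableiffsigma2}, and it sidesteps the oscillation problem entirely.
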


\noindent  Theorems like \ref{verifiableiffopen}, \ref{limverififfsigmatwo}, and \ref{solvability} constitute an exact correspondence between topology and learnability. 
\section{Verifiability from Statistical Information}
\noindent There is a seeming gulf between propositional information and statistical samples. Propositional information literally rules out relevant possibilities. In sharp contrast, a random sample is often logically compatible with {\em every} possible probability distribution. We sidestep that fundamental difficulty by solving for the {\em unique} topology in which the open sets are precisely the {\em statistically verifiable} propositions, which provides an exact, statistical analogue of Theorem \ref{verifiableiffopen}.

\subsection{Samples and Worlds}
\noindent A {\em sample space} $\gs=(\Omega,\ct)$ is a set of possible random samples $\Omega$ equipped with a topology $\ct$ generated by a basis $\ci$. The worlds in $W$ assign probabilities to every set in $\cb$, the Borel $\sigma$-algebra generated by the topology on $\gs$. The topology on the sample space reflects what is verifiable about the sample itself. As in the purely propositional setting, it is {\em verifiable} that sample $\omega$ lands in $A$ iff $A$ is open, and  it is {\em decidable} whether sample $\omega$ falls into region $A$ iff $A$ is clopen. For example, suppose that region $A$ is the closed interval  $[1/2, \infty]$, and suppose that the sample $\omega$ happens to land right on the end-point $1/2$ of $A$. Suppose, furthermore, that given enough time and computational power, the sample $\omega$ can be specified to arbitrary, finite precision. But no finite degree of precision: $\omega \approx .50$; $\omega \approx .500$; $\omega \approx .5000$; $\ldots$ suffices to determine that $\omega$ is truly in $A$.  But the mere possibility of a sample hitting the boundary of $A$ does not matter statistically, if the chance of obtaining such a sample is zero. A Borel set $A$ for which $\mu(\bdry(A))=0$ is said to be {\em almost surely clopen (decidable) in $\mu$}.\footnote{A set that is almost surely clopen in $\mu$ is sometimes called a {\em continuity set} of $\mu$.} Borel set $A$ is almost surely clopen iff it is almost surely clopen in every $\mu$ in $W$, and a collection of Borel sets $\mathcal{S}$ is almost surely clopen iff every element of $\mathcal{S}$ is almost surely clopen.\\

\begin{example}
\label{coinflip}
Consider the outcome of a single coin flip. The set $\Omega$ of possible outcomes is $\{H, T\}$. Since every outcome is decidable, the appropriate topology on the sample space is $\ct = \{ \varnothing, \{H\}, \{T\}, \{H, T\}\}$, the discrete topology on $\Omega$. Let $W$ be the set of all probability measures assigning a bias to the coin. Since every element of $\ct$ is clopen, every element is also almost surely clopen.
\end{example}

\begin{example}
\label{reals}
Consider the outcome of a continuous measurement. Then the sample space $\Omega$ is the set of real numbers. Let the basis $\ci$ of the sample space topology be the usual interval basis on the reals. That captures the intuition that it is verifiable that the sample landed in some open interval, but it is not verifiable that it landed exactly on the boundary of an open interval. There are no nontrivial decidable (clopen) propositions in that topology. However, in typical statistical applications, $W$ contains only probability measures $\mu$ that assign zero probability to the boundary of an arbitrary open interval. Therefore, every open interval $E$ is almost surely decidable, i.e. $\mu(\bdry (E))=0$.  \\
\end{example}

\noindent {\em Product spaces} represent the outcomes of repeated sampling. Let $I$ be an index set, possibly infinite. Let $(\Omega_i, \ct_i)_{i\in I}$ be sample spaces, each with basis $\ci_i$. Define the {\em product} $(\Omega,\ct)$ of the $(\Omega_i, \ct_i)$ as follows: let $\Omega$ be the Cartesian product of the $\Omega_i$; let $\ct$ be the product topology, i.e. the topology in which the open sets are unions of Cartesian products $\times_{i} O_i$, where each $O_i$ is an element of $\ct_i$, and all but finitely many $O_i$ are equal to $\Omega_i$. When $I$ is finite, the products of basis elements in $\ci_i$ are the intended basis for $\ct$. Let $\cb$ be the $\sigma$-algebra generated by $\ct$. Let $\mu_i$ be a probability measure on $\cb_i$, the Borel $\sigma$-algebra generated by the $\ct_i$. The {\em product measure} $\mu=\times_i \mu_i$ is the unique measure on $\cb$ such that, for each $B\in \cb$ expressible as a Cartesian product of $B_i\in \cb_i$, where all but finitely many of the $B_i$ are equal to $\Omega_i$, $\mu(B)=\prod \mu_i(B_i)$. Let $\mu^{|I|}$ denote the $|I|$-fold product of $\mu$ with itself.

\subsection{Statistical Tests}
\noindent

\noindent A statistical {\em method} is a measurable function from random samples to propositions over $W$.\footnote{The $\sigma$-algebra on the range of the method is assumed to be the power set.} A {\em test} of a statistical hypothesis $H\subseteq W$ is a statistical method $\psi:\Omega\rightarrow \{W,H^{\sf c}\}$. Call $\psi^{-1}(W)$ the {\em acceptance region}, and $\psi^{-1}(H^{\sf c})$ the {\em rejection region} of the test.\footnote{The acceptance region is $\psi^{-1}(W)$, rather than $\psi^{-1}(H)$, because failing to reject $H$ licenses only the trivial inference $W$. } The {\em power} of test $\psi(\cdot)$ is the worst-case probability that it rejects truly, i.e. $\inf_{\mu \in H^{\sf c}} \mu[\psi^{-1}(H^{\sf c})]$. The {\em significance level} of a test is the worst-case probability that it rejects falsely, i.e. $\sup_{\mu \in H} \mu[\psi^{-1}(H^{\sf c})]$.\\ 

\noindent A test is {\em feasible in $\mu$} iff its acceptance region is almost surely decidable in $\mu$. Say that a test is {\em feasible} iff it is feasible in every world in $W$. More generally, say that a method is {\em feasible} iff the preimage of every element of its range is almost surely decidable in every world in $W$. Tests that are not feasible in $\mu$ are impossible to implement --- as described above, if the acceptance region is not almost surely clopen in $\mu$, then with non-zero probability, the sample lands on the boundary of the acceptance region, where one cannot decide whether to accept or reject. If one were to draw a conclusion at some finite stage, that conclusion might be reversed in light of further computation. Tests are supposed to {\em solve} inductive problems, not to generate new ones. Therefore we consider only feasible methods in the following development.

\subsection{The Weak Topology}
\noindent A sequence of measures $(\mu_n)_n$ {\em converges weakly} to $\mu$, written $\mu_n \Rightarrow \mu$, iff $\mu_n(A) \rightarrow \mu(A)$ for every $A$ almost surely clopen in $\mu$. It is immediate that $\mu_n \Rightarrow \mu$ iff for every $\mu$-feasible test $\psi(\cdot)$, $\mu_n(\psi \text{ rejects}) \rightarrow \mu(\psi \text{ rejects})$. It follows that no feasible test of $H=\{\mu\}$ achieves power strictly greater than its significance level. Furthermore, every feasible method that correctly infers $H$ with high chance in $\mu$, exposes itself to a high chance of error in ``nearby'' $\mu_n$. It is a standard fact that one can topologize $W$ in such a way that weak convergence is exactly convergence in the topology: the usual sub-basis is given by sets of the form $\{ \nu : |\mu(A) - \nu(A)| < \epsilon \}$, where $A$ is almost surely clopen in $\mu$.\footnote{Recall that a sequence $\mu_n$ converges to $\mu$ in a topology iff for every open set $E$ containing $\mu$, there is $n_0$ such that $\mu_n\in E$ for all $n\geq n_0$. If a topology is first countable, $\mu_n$ converge to $\mu$ in the topology iff $\mu$ is in the topological closure of the $\mu_n$.} That topology is called the {\em weak topology}. If $\gs$ is second countable and metrizable, then the weak topology on $W$ is also second countable and metrizable, e.g. by the Prokhorov metric \cite[Theorem 6.8]{billingsley}. When $\ci$ is countable and almost surely clopen, the weak topology is generated in a particularly natural way.\footnote{That condition is satisfied, for example, in the standard case in which the worlds in $W$ are Borel measures on $\mathbb{R}^n$, and all measures are absolutely continuous with respect to Lebesgue measure, i.e. when all measures have probability density functions, which includes normal, chi-square, exponential, Poisson, and beta distributions.  It is also satisfied for discrete distributions like the binomial, for which the topology on the sample space is the discrete (power set) topology, so every acceptance zone is clopen and, hence, feasible. Naturally, it is satisfied in the particular cases of Examples \ref{coinflip} and \ref{reals}.}

\begin{lemma}
\label{countablebasis}
Suppose that $\ci$ is a countable, almost surely clopen basis for $W$. Let $\ca$ be the algebra generated by $\ci$.  Then the collection  $\{ \mu : \mu(A) \in (a,b) \}$ for $A\in\ca$ and $a,b\in \mathbb{Q}$ is a countable sub-basis for the weak topology.\\
\end{lemma}

\noindent That sub-basis for the weak topology has two fundamental advantages over the standard sub-basis.  First, its closure under finite intersection is evidently a countable basis.  Second, it is easy to show that the sub-basis elements are statistically verifiable. The following observations are easy consequences of the Lemma. In the setting of Example \ref{coinflip}, the set of all $\{ \mu : \mu(\{H\}) \in (a,b) \}$ for $a,b\in\mathbb{Q}$, assigning open intervals of biases for the coin, forms a sub-basis for the weak topology on $W$. In fact, it forms a basis. If $\mu$ is the world in which the bias of the coin is exactly $.5$ and $\mu_n$ is the world in which the bias is exactly $.5 + 1/2^n$, then the $\mu_n$ converge to $\mu$ in the weak topology.  

\section{Statistical Verifiability}
\label{statisticalverif}
\noindent In Section 1, proposition $H$ was said to be verifiable iff there is an infallible method that converges on increasing information to $H$ iff $H$ is true. That condition implies that there is a method that achieves {\em every} bound on {\em chance} of error, and converges to $H$ iff $H$ is true.\footnote{If for every $\epsilon>0$ your chance of error is less than $\epsilon$, then your chance of error is zero: you are {\em almost surely} infallible.} In statistical settings, one cannot insist on such a high standard of infallibility. Instead, say that $H$ is {\em verifiable in chance} iff for every bound on error, there is a method that achieves it, and that converges in probability to $H$ iff $H$ is true. The reversal of quantifiers expresses the fundamental difference between statistical and propositional verifiability and, hence, between statistical and propositional information. Say that a family $\{\lambda_n\}_{n\in\mathbb{N}}$ of feasible tests of $H^{\sf c}$ is an {\em $\alpha$-verifier in chance} of $H\subseteq W$ iff for all $n\in\mathbb{N}$:

\begin{enumerate} 
\item[SV1.] $\mu^n[\lambda_n^{-1}(H)] \leq \alpha$, for all $\mu\in H^{\sf c}$ and 
\item[SV2.] $\underset{n\rightarrow \infty}{\lim} \hspace{1pt} \mu^n[\lambda_n^{-1}(H)] = 1$, for all $\mu\in H$.
\end{enumerate}

\noindent Say that $H\subseteq W$ is {\em $\alpha$-verifiable in chance} iff there is an $\alpha$-verifier in chance of $H$. Say that $H\subseteq W$ is {\em verifiable in chance} iff $H$ is $\alpha$-verifiable in chance for every $\alpha>0$.\\

\noindent The preceding definition only bounds the chance of error at each sample size. One might strengthen SV1 to the requirement that the overall chance of error be bounded, when $H$ is false. Furthermore, one might also strengthen SV2 by requiring almost sure convergence to $H$, rather than mere convergence in probability in every measure in $W$. Say that a family $\{\lambda_n\}_{n\in\mathbb{N}}$ of feasible tests of $H^{\sf c}\subseteq W$ is an {\em almost sure $\alpha$-verifier} of $H$ iff

\begin{enumerate} 
\item[SV3.] $\sum_{n=1}^\infty \mu^n[\lambda_n^{-1}(H)] \leq \alpha$ for all $\mu \in H^{\sf c}$ and 
\item[SV4.] $\mu^\infty\left[\underset{n\rightarrow \infty}{\liminf} \hspace{1pt} \lambda_n^{-1}(H)\right]=1$ for all $\mu\in H$.\\
\end{enumerate}

\noindent Say that $H\subseteq W$ is {\em almost surely $\alpha$-verifiable} iff there is an almost sure $\alpha$-verifer of $H$. Say that $H$ is {\em almost surely verifiable} iff $H$ is almost surely $\alpha$-verifiable, for every $\alpha>0$. Clearly, if $H$ is almost surely verifiable, then $H$ is verifiable in chance.\\

\noindent We now weaken the preceding two criteria of statistical verifiability to arrive at statistical notions of limiting verifiability.  Say that a family $\{\lambda_n\}_{n\in\mathbb{N}}$ of feasible methods is a {\em limiting verifier in chance} of $H\subseteq W$ iff 
\begin{enumerate}
\item  $\mu \in H \text{ iff there is } H'\subseteq H, \text{ s.t. }\underset{n\rightarrow\infty}{\lim}\hspace{1pt} \mu^n[\lambda_n^{-1}(H')]=1;$
\item $\mu \notin H \text{ iff for all } H'\subseteq H, \underset{n\rightarrow\infty}{\lim}\hspace{1pt} \mu^n[\lambda_n^{-1}(H')]=0.$
\end{enumerate}
Say that $H\subseteq W$ is {\em limiting verifiable in chance} iff there is a limiting verifier in chance of $H$.\\

\noindent As before, there is an almost sure version of that success criterion. Say that a family $\{\lambda_n\}_{n\in\mathbb{N}}$ of feasible methods is a {\em limiting almost sure verifier} of $H\subseteq W$ iff 
\begin{enumerate}
\item $\mu \in H \text{ iff there is } H'\subseteq H, \text{ s.t. } \mu^\infty[\underset{n\rightarrow\infty}{\liminf} \hspace{1pt} \lambda_n^{-1}(H')]=1;$
\item $\mu \notin H \text{ iff for all } H'\subseteq H, \text{ }  \mu^\infty[\underset{n\rightarrow\infty}{\limsup} \hspace{1pt}\lambda_n^{-1}(H')]=0.$
\end{enumerate}
Say that $H\subseteq W$ is {\em limiting a.s. verifiable} iff there is a limiting a.s. verifier of $H$.\\

\noindent Finally, there is a natural statistical analogue of solvability. Recall that an {\em empirical problem} is a countable partition $\cq$ of the worlds in $W$ into a set of {\em answers}. Say that a family $\{\lambda_n\}_{n\in\mathbb{N}}$ of feasible methods is a {\em solution in chance} to $\cq$ iff for every $\mu\in W$, $\lim_{n\rightarrow\infty} \mu^n[\lambda_n^{-1}(\cq_\mu)] = 1$. Say that $\cq$ is {\em solvable in chance} iff there exists a solution in chance to $\cq$. A family $\{\lambda_n\}_{n\in\mathbb{N}}$ of feasible methods is an {\em almost sure solution} to $\cq$ iff for every $\mu \in W$, $\mu^\infty[\liminf_{n\rightarrow\infty} \lambda_n^{-1}(\cq_\mu)]=1$. Furthermore, say that $\cq$ is {\em almost surely solvable} iff there exists an almost sure solution to $\cq$.   

\section{Results}
\label{results}
Theorem \ref{openisverifiable} states that, for sample spaces with countable, almost surely clopen bases, verifiability in chance and almost sure verifiability are equivalent to being open in the weak topology. As promised in the introduction, that fundamental result lifts the topological perspective to inferential statistics.\\

\begin{theorem}
\label{openisverifiable}
Suppose that $W$ is a set of Borel measures on $\gs=(\Omega, \ct)$, a metrizable sample space with countable, almost surely clopen basis $\ci$. Then the following are equivalent: 
\begin{enumerate}
\item $H\subseteq W$ is $\alpha$-verifiable in chance for some $\alpha>0$;
\item $H\subseteq W$ is almost surely verifiable;
\item $H\subseteq W$ is open in the weak topology.\\
\end{enumerate}
\end{theorem}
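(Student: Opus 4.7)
The plan is to close the cycle $2 \Rightarrow 1 \Rightarrow 3 \Rightarrow 2$, with $2 \Rightarrow 1$ already noted in the paper. Throughout, I use that the weak topology on $W$ is metrizable (by Prokhorov's theorem, since $\gs$ is second countable and metrizable), so openness can be detected by sequences.

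For $1 \Rightarrow 3$, I argue by contradiction. Suppose $\{\lambda_n\}$ is an $\alpha$-verifier in chance of $H$ and that $H$ is not open in the weak topology. Then there exist $\mu \in H$ and a sequence $\mu_k \in H^{\sf c}$ with $\mu_k \Rightarrow \mu$. By SV2, fix $n$ so large that $\mu^n[\lambda_n^{-1}(H)] > \alpha$. Feasibility of $\lambda_n$ makes $\lambda_n^{-1}(H)$ a continuity set of $\mu^n$, and since weak convergence is preserved under products on separable metric spaces, $\mu_k^n \Rightarrow \mu^n$. The portmanteau characterization then yields $\mu_k^n[\lambda_n^{-1}(H)] \to \mu^n[\lambda_n^{-1}(H)] > \alpha$, contradicting the uniform bound $\mu_k^n[\lambda_n^{-1}(H)] \le \alpha$ guaranteed by SV1 for $\mu_k \in H^{\sf c}$.

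For $3 \Rightarrow 2$, I exploit Lemma \ref{countablebasis}: the weak topology has a countable sub-basis of sets $S_{A,a,b} = \{\mu : \mu(A) \in (a,b)\}$ with $A \in \ca$ and $a,b \in \mathbb{Q}$. The argument has three modular steps. (i) Each $S_{A,a,b}$ is almost surely verifiable. For any $\alpha > 0$, let $\lambda_n$ accept iff the empirical frequency $\hat\mu_n(A)$ lies in $(a + \epsilon_n, b - \epsilon_n)$, with $\epsilon_n \to 0$ tuned slowly enough (e.g.\ $\epsilon_n = \sqrt{(\log n + C_\alpha)/(2n)}$) that Hoeffding's inequality yields $\sum_n \mu^n[\lambda_n \text{ accepts}] \le \alpha$ uniformly for $\mu \notin S_{A,a,b}$; feasibility is immediate because $A$ being almost surely clopen puts $\lambda_n^{-1}(S_{A,a,b})$ into the algebra of almost surely clopen subsets of $\Omega^n$. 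The same Hoeffding bound, applied to $\mu \in S_{A,a,b}$ once $\epsilon_n$ drops below $\min(\mu(A) - a,\, b - \mu(A))$, gives summable type-II error, so the first Borel--Cantelli lemma secures SV4. (ii) Finite intersections are almost surely verifiable: to verify $H_1 \cap H_2$, conjoin $\alpha$-verifiers of each $H_i$ and accept only when both accept; SV3 passes through whichever conjunct is false, and SV4 follows from $\liminf_n (A_n \cap B_n) = \liminf_n A_n \cap \liminf_n B_n$. (iii) Countable unions are almost surely verifiable: given $H = \bigcup_i H_i$, take an $\alpha 2^{-i}$-verifier $\{\lambda^i_n\}$ of $H_i$ and let $\lambda_n$ accept iff some $\lambda^i_n$ with $i \le n$ accepts. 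For $\mu \notin H$, exchanging the order of summation gives $\sum_n \sum_{i \le n} \mu^n[\lambda^i_n \text{ accepts}] = \sum_i \sum_{n \ge i} \mu^n[\lambda^i_n \text{ accepts}] \le \sum_i \alpha 2^{-i} = \alpha$, securing SV3; for $\mu \in H_i$, the inclusion $\liminf_n (\lambda^i_n)^{-1}(H_i) \subseteq \liminf_n \lambda_n^{-1}(H)$ secures SV4.

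The main obstacle is calibrating the empirical test in Step (i) so that Hoeffding simultaneously delivers summable type-I and type-II errors, uniformly across $\mu$ in the relevant region. This uniformity is what converts the topological fact that $\mu \mapsto \mu(A)$ is continuous in the weak topology into a quantitative, sample-based inference, and it is where the propositional-to-statistical bridge really happens.
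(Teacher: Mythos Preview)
Your argument tracks the paper's almost exactly: the $1\Rightarrow 3$ contradiction via product weak convergence and feasibility (the paper's Lemma~\ref{product}), and the $3\Rightarrow 2$ direction by Hoeffding on sub-basis elements together with closure under finite intersection and countable union, which the paper isolates as a separate Lemma~\ref{asclosedunderconjdisj} and runs on the one-sided sub-basis $\{\mu:\mu(A)>b\}$ rather than your two-sided version. One calibration slip to fix: your illustrative choice $\epsilon_n=\sqrt{(\log n+C_\alpha)/(2n)}$ gives a Hoeffding tail of order $e^{-C_\alpha}/n$, which is not summable; you need the tails to decay like $1/n^2$, e.g.\ the paper's $t_n=\sqrt{\tfrac{1}{2n}\ln(\pi^2 n^2/6\alpha)}$, whose tails $6\alpha/(\pi^2 n^2)$ sum exactly to~$\alpha$.
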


\noindent For an elementary application of the Theorem, consider, in the setting of Example \ref{coinflip}, the sharp hypothesis that the bias of the coin is exactly $.5$. That hypothesis is almost surely refutable, but it is not almost surely verifiable. Since a topological space is determined uniquely by its open sets, Theorem \ref{openisverifiable} implies that the weak topology is the {\em unique} topology that characterizes statistical verifiability under the weak conditions stated in the antecedent of the theorem.  Thus, under those conditions, the weak topology is not merely a convenient formal tool---it is {\em the} topology of statistical information.   \\

\noindent Here is the promised statistical analogue of Theorem \ref{limverififfsigmatwo}.

\begin{theorem}
\label{limverif}
Suppose that $W$ is a set of Borel measures on $\gs=(\Omega, \ct)$, a metrizable sample space with countable, almost surely clopen basis $\ci$. Then the following are equivalent: 
\begin{enumerate}
\item $H\subseteq W$ is limiting verifiable in chance;
\item $H\subseteq W$ is limiting almost surely verifiable;
\item $H\subseteq W$ is a countable union of closed sets in the weak topology.\\
\end{enumerate}
\end{theorem}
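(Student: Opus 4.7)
The plan is to prove the three-way equivalence via $(2)\Rightarrow(1)$, $(3)\Rightarrow(2)$, and $(1)\Rightarrow(3)$. The implication $(2)\Rightarrow(1)$ is bookkeeping: unpacking $\mu^\infty[\liminf \lambda_n^{-1}(H')]=1$ as $\lim_N \mu^\infty[\bigcap_{n\geq N}\lambda_n^{-1}(H')]=1$ and using the cylinder-marginal inequality $\mu^\infty[\bigcap_{n\geq N}\lambda_n^{-1}(H')] \leq \mu^N[\lambda_N^{-1}(H')]$ squeezes out $\lim_n \mu^n[\lambda_n^{-1}(H')]=1$; a symmetric argument handles the $\limsup$ clause.

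For $(3)\Rightarrow(2)$, write $H=\bigcup_{k\in\mathbb{N}} F_k$ with each $F_k$ closed in the weak topology. By Theorem~\ref{openisverifiable}, each open $F_k^{\sf c}$ admits an almost sure $2^{-k}$-verifier $\{\psi_n^{(k)}\}_n$. Define
\[ \lambda_n(\omega) = F_{j(\omega,n)}, \quad j(\omega,n) = \min\{k\leq n : \psi_n^{(k)}(\omega)=W\}, \]
with the convention $\lambda_n(\omega)=W$ when no such $k$ exists. Borel-Cantelli applied to SV3 (summable in $k$) combined with SV4 shows that on a $\mu^\infty$-full-measure event, each $\psi_n^{(k)}$ stabilizes, accepting iff $\mu\in F_k$. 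Consequently, for $\mu\in H$ with $k^\ast=\min\{k:\mu\in F_k\}$, $\lambda_n$ stabilizes almost surely to $F_{k^\ast}\subseteq H$; for $\mu\notin H$, each event ``$\lambda_n=F_k$ infinitely often'' is $\mu^\infty$-null, and since $\lambda_n$ is valued in $\{F_k\}_k\cup\{W\}$ with $W\not\subseteq H$ (the case $H=W$ being trivial), $\mu^\infty[\limsup \lambda_n^{-1}(H')]=0$ for every $H'\subseteq H$.

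For $(1)\Rightarrow(3)$, feasibility makes each $\lambda_n^{-1}(H')$ a $\mu^n$-continuity set for every $\mu\in W$, so $\mu_j\Rightarrow\mu$ (and hence $\mu_j^n\Rightarrow\mu^n$) yields $\mu_j^n[\lambda_n^{-1}(H')]\to\mu^n[\lambda_n^{-1}(H')]$; i.e., $\mu\mapsto\mu^n[\lambda_n^{-1}(H')]$ is weakly continuous on $W$. Hence
\[ A_{H'} \;=\; \bigcup_N \bigcap_{n\geq N}\{\mu : \mu^n[\lambda_n^{-1}(H')] \geq 1/2\} \]
is $F_\sigma$ in the weak topology. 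The defining property of a limiting verifier in chance then gives $H=\bigcup_{H'\subseteq H} A_{H'}$: any $\mu\in H$ lies in $A_{H'_\mu}$ for its witness $H'_\mu$, while any $\mu\in A_{H'}$ with $H'\subseteq H$ must itself lie in $H$ (else $\mu^n[\lambda_n^{-1}(H')]\to 0$ contradicts the eventual $\geq 1/2$ bound). To cut this union down to countable size, fix a countable dense $\{\mu_i\}_i\subseteq W$ (available because the weak topology is second countable) and let
\[ \mathcal{R} \;=\; \bigcup_{i,n}\{H' : \mu_i^n[\lambda_n^{-1}(H')]>0\}, \]
which is countable since each $\mu_i^n$ is a probability measure. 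Any witness $H'_\mu$ for $\mu\in H$ satisfies $\mu^{n_0}[\lambda_{n_0}^{-1}(H'_\mu)]>1/2$ for some $n_0$, and continuity then forces $\mu_i^{n_0}[\lambda_{n_0}^{-1}(H'_\mu)]>0$ for some $\mu_i$ near $\mu$, so $H'_\mu\in\mathcal{R}$. Hence $H=\bigcup_{H'\in\mathcal{R},\,H'\subseteq H} A_{H'}$ is $F_\sigma$ in the weak topology.

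The main obstacle is this countability reduction: a priori nothing prevents $\lambda_n$ from having uncountable range, so the naive union over all $H'\subseteq H$ is not manifestly countable. The trick is to exploit metrizability of the weak topology (to secure a countable dense family of measures) together with feasibility (which promotes weak convergence of measures to numerical convergence of the probabilities $\mu^n[\lambda_n^{-1}(H')]$); jointly they guarantee that every $H'$ witnessing some $\mu\in H$ must already be detected with positive probability by some element of the dense family.
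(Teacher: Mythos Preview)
Your proof is correct. The $(2)\Rightarrow(1)$ and $(3)\Rightarrow(2)$ directions match the paper's argument essentially line by line (the paper simply labels $(2)\Rightarrow(1)$ ``immediate from the definitions,'' and its construction for $(3)\Rightarrow(2)$ is exactly your minimum-index rule with Borel--Cantelli).

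For $(1)\Rightarrow(3)$ the paper takes a slightly different route: it defines open sets $\mathsf{trig}_n(H')=\{\mu:\mu^n[\lambda_n^{-1}(H')]>\alpha\}$ and $\mathsf{def}_n(H')=\bigcup_{m>n}\{\mu:\mu^m[\lambda_m^{-1}(H')^{\sf c}]>1-\alpha\}$, writes $H$ as the union over $n$ and $H'\in\mathrm{rng}(\lambda_n)$ of the locally closed sets $\mathsf{trig}_n(H')\setminus\mathsf{def}_n(H')$, and then invokes metrizability of the weak topology to pass from a countable union of locally closed sets to $F_\sigma$. You instead go directly to $F_\sigma$ via the closed level sets $\{\mu:\mu^n[\lambda_n^{-1}(H')]\geq 1/2\}$, which is a bit cleaner. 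More substantively, you explicitly handle the countability of the union over witnesses $H'$: the paper unions over $H'\in\mathrm{rng}(\lambda_n)$ without saying why this range is countable, whereas your separability argument (only countably many $H'$ can receive positive $\mu_i^n$-mass for any fixed $\mu_i$ in a countable dense set, since the preimages are disjoint) supplies exactly the missing justification. Both arguments rest on the same engine---feasibility plus Lemma~\ref{product} making $\mu\mapsto\mu^n[\lambda_n^{-1}(H')]$ weakly continuous---but yours is more self-contained on this point.
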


\noindent Finally, there is a natural statistical analogue of Theorem \ref{solvability}.

\begin{theorem}
\label{solvableiffsigma2}
Suppose that $W$ is a set of Borel measures on $\gs=(\Omega, \ct)$, a metrizable sample space with countable, almost surely clopen basis $\ci$. Then the following are equivalent: 
\begin{enumerate}
\item $\cq$ is solvable in chance;
\item $\cq$ is almost surely solvable;
\item $\cq$ partitions $W$ into countable unions of closed sets in the weak topology.\footnote{A similar result is proven in \cite[Theorem 2]{dembo1994topological} under  different conditions. Dembo and Peres do not require their methods to be feasible, so Theorem \ref{limverif} does not straightforwardly generalize their result. It is not difficult to reprove Theorem \ref{limverif} without that requirement to obtain a generalization of the result in \cite{dembo1994topological}.}
\end{enumerate}
\end{theorem}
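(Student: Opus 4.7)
I would prove the cycle $(2)\Rightarrow(1)\Rightarrow(3)\Rightarrow(2)$, with Theorem~\ref{limverif} acting as the bridge between the $F_\sigma$ condition on individual answers and their limiting verifiability. The implication $(2)\Rightarrow(1)$ is Fatou's lemma applied to the events $E_n = \{\omega\in\Omega^\infty : \lambda_n(\omega|_n) = \cq_\mu\}$: almost-sure solvability gives $\mu^\infty[\liminf_n E_n] = 1$, so
\[
\liminf_n \mu^n[\lambda_n^{-1}(\cq_\mu)] \;=\; \liminf_n \mu^\infty[E_n] \;\geq\; \mu^\infty[\liminf_n E_n] \;=\; 1,
\]
forcing convergence in chance to the true answer.

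For $(1)\Rightarrow(3)$, I would show that any solution in chance $\{\lambda_n\}$ is itself a limiting verifier in chance of each answer $Q\in\cq$, whence each $Q$ is a countable union of weakly closed sets by Theorem~\ref{limverif}. The key point is that the outputs of $\lambda_n$ lie in the pairwise-disjoint partition $\cq$, so $\lambda_n^{-1}(H')$ is empty for every $H'\subseteq Q$ other than $H'=Q$. Condition~(1) of the definition of limiting verifier in chance is then witnessed by $H'=Q$ when $\mu\in Q$; condition~(2) follows for $\mu\notin Q$ from $\mu^n[\lambda_n^{-1}(Q)] \leq 1 - \mu^n[\lambda_n^{-1}(\cq_\mu)] \to 0$.

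For $(3)\Rightarrow(2)$, I would combine per-answer verifiers by a priority rule. Enumerate $\cq=\{Q_1,Q_2,\ldots\}$ and use Theorem~\ref{limverif} to fix a limiting a.s.\ verifier $\{\lambda_n^i\}$ of each $Q_i$. Define $\lambda_n(\omega|_n)=Q_j$, where $j$ is the least $i\leq n$ with $\lambda_n^i(\omega|_n)\subseteq Q_i$, defaulting to $Q_1$ if no such $i$ exists. Feasibility is preserved since, at each stage, the preimages of $\lambda_n$ are Boolean combinations of finitely many almost-surely-clopen sets, given that Theorem~\ref{limverif}'s construction yields verifiers with finite range per stage. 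For $\mu\in Q_k$, the $Q_k$-verifier a.s.\ eventually commits to some $H'\subseteq Q_k$, so $\lambda_n^k\subseteq Q_k$ eventually; provided that, for each $i<k$, the $Q_i$-verifier a.s.\ eventually fails to assert $Q_i$, the priority rule selects $Q_k$ for all large $n$, giving $\mu^\infty[\liminf_n \lambda_n^{-1}(Q_k)]=1$.

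The main obstacle is justifying that ``provided that.'' The definition of limiting a.s.\ verifier yields $\mu^\infty[\limsup_n\{\lambda_n^i=H'\}]=0$ only for each individual $H'\subseteq Q_i$, not for the union event $\{\lambda_n^i\subseteq Q_i\}$ obtained as $H'$ varies. I would handle this by exploiting the concrete form of the verifiers built in the proof of Theorem~\ref{limverif}: starting from an $F_\sigma$ decomposition $Q_i = \bigcup_m C_m^i$, the verifier's stage-$n$ outputs lie in $\{W,C_1^i,\ldots,C_n^i\}$, and by an analogue of the strengthening from verifiability in chance to summable-error verifiability in Theorem~\ref{openisverifiable}, one may arrange $\sum_n \mu^n[\lambda_n^i\subseteq Q_i]<\infty$ whenever $\mu\notin Q_i$. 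Borel--Cantelli then kills $\{\lambda_n^i\subseteq Q_i\ \text{i.o.}\}$, and intersecting the resulting full-measure events over the finitely many $i<k$ completes the correctness argument.
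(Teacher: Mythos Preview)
Your $(2)\Rightarrow(1)$ and $(1)\Rightarrow(3)$ match the paper's argument; the paper records $(2)\Rightarrow(1)$ as ``immediate from the definitions,'' and your Fatou computation is that immediacy made explicit. One small slip in $(1)\Rightarrow(3)$: the definition of a solution in chance does not force the outputs of $\lambda_n$ to lie in $\cq$, so $\lambda_n^{-1}(H')$ need not be empty for $H'\subsetneq Q$. The argument still goes through, since for $\mu\notin Q$ and any $H'\subseteq Q$ one has $H'\neq\cq_\mu$ and hence $\lambda_n^{-1}(H')\subseteq\{\lambda_n\neq\cq_\mu\}$, whose $\mu^n$-probability tends to $0$.

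For $(3)\Rightarrow(2)$ the paper takes a cleaner route that bypasses the obstacle you flag. Instead of first building per-answer limiting a.s.\ verifiers $\{\lambda_n^i\}$ via Theorem~\ref{limverif} and then running a priority over answers, the paper drops one level: it fixes decompositions $A_j=\bigcup_i C_{ij}$, enumerates \emph{all} closed pieces $C_{ij}$ through a single surjection $f:\mathbb{N}\to\mathbb{N}\times\mathbb{N}$, invokes Theorem~\ref{openisverifiable} to obtain a.s.\ $\alpha$-verifiers $\{\psi_n^{ij}\}$ of each open complement $C_{ij}^{\sf c}$, and runs the priority rule directly on the $C_{f(k)}$. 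For $\mu\in C_{f(k)}$ only the finitely many competitors $C_{f(i)}$ with $i<k$ must be dismissed, and SV4 (a.s.\ convergence of $\psi_n^{f(i)}$ to $C_{f(i)}^{\sf c}$) handles each one; SV3 plus Borel--Cantelli ensures $\psi_n^{f(k)}$ eventually outputs $W$. No summability of false negatives is required.

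Your two-layer packaging creates precisely the difficulty you diagnose, and the fix you propose---arranging $\sum_n \mu^n[\lambda_n^i\subseteq Q_i]<\infty$ for $\mu\notin Q_i$---does not obviously go through. In the Theorem~\ref{limverif} construction, the event $\{\lambda_n^i\subseteq Q_i\}$ is the union of $\{\psi_n^{m}=W\}$ over $m\le n$, a number of pieces growing with $n$. For $\mu\in (C_m^i)^{\sf c}$ the false-negative probability $\mu^n[\psi_n^{m}=W]$ does decay, but at a rate depending on $\mu$ and on $m$ (e.g., through the gap $\mu(B)-b$ in the Hoeffding bound), so there is no uniform control that would make $\sum_n\sum_{m\le n}\mu^n[\psi_n^{m}=W]$ converge. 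Any rescue would require a delicate scheduling of when each $C_m^i$ enters play, which you have not supplied and which the paper's single-level enumeration renders unnecessary.
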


\section{Related Work}
\label{relatedwork}
\noindent Section \ref{deductiveverif} recapitulates foundational results in topological learning theory. Results stated in that section appear previously in \cite{debrecht2009}, \cite{genin2015choice}, and \cite{baltag2015topology}. The theorems stated in section \ref{results} are new, as far as we can tell. In statistical terminology, our Theorem \ref{openisverifiable} provides necessary and sufficient conditions for the existence of a Chernoff consistent test. Although there is extensive statistical work on pointwise consistent hypothesis testing, we are unaware of any topological result analogous to Theorem \ref{openisverifiable}. The closest work is \cite{ryabko2011learnability}, where a topological characterization is given for consistent hypothesis testing of ergodic processes with samples from a discrete, finite alphabet. That result is incomparable with our own, because, although our work is done in the i.i.d setting, we allow samples to take values in an arbitrary, separable metric space. Furthermore, the topology employed in \cite{ryabko2011learnability} is not the weak topology, but the topology of distributional distance. The existence of uniformly consistent tests is investigated topologically in \cite{ermakov2013distinguishability}, where some sufficient conditions are given. Limiting statistical solvability, or {\em discernability}, as it is known in the statistical literature, has been investigated topologically in \cite{dembo1994topological} and \cite{kulkarni1995general}. The results of \cite{dembo1994topological} are generalized to ergodic processes in \cite{nobel2006hypothesis}. Although the setting is slightly different, our Theorem \ref{solvableiffsigma2} gives a simpler back-and-forth condition than the one given in \cite{dembo1994topological} and is arrived at more systematically, by building on the fundamental Theorem \ref{openisverifiable}. The weak topology is used in \cite{dembo1994topological}, but our Theorem \ref{openisverifiable} shows that the weak topology is the {\em unique} topology for which the open sets are exactly the statistically verifiable propositions. Our result shows, therefore, that the weak topology is more than just a convenient technical device.

\section{Conclusion}
\noindent This note lifts the topological perspective on empirical inquiry to statistics. In the deductive setting, open sets are deductively verifiable by true, propositional information. Theorem \ref{openisverifiable} exhibits a topology on probability measures in which the open sets are exactly the propositions statistically verifiable from random samples. In the deductive setting, learnability in the limit receives an elegant topological characterization \cite{baltag2015topology,genin2015choice}. Theorems \ref{limverif} and \ref{solvableiffsigma2} provide analogous topological characterizations of learnability in the limit from statistical data. In light of those fundamental bridge results, we expect many of the streamlined insights of formal learning theory to apply literally to the concrete statistical problems that arise in statistics and machine learning. Of particular interest is the learning theoretic vindication of Ockham's razor, developed topologically in \cite{genin2015choice}, and \cite{kellygeninSL2016}.

%
\bibliographystyle{eptcs}
\bibliography{statock}  
%
%
\appendix
\section{Proofs and Lemmas}
\subsection{Deductive Verifiability}\noindent\begin{proof}[Proof of Theorem \ref{verifiableiffopen}]
{\em Right to left.} Suppose that $H$ is open, and that $H$ is true in $w$. Let $L(E)=H$ if $E$ entails $H$, and let $L(E)=W$ otherwise. Since $H$ is a union of information states, there is an information state $F$ true in $w$ that entails $H$. Therefore,  $L(F)=H$. Furthermore, for any information state $G$ true in $w$, we have that $L(G\cap F)=H$. So $L$ converges to true belief in $H$. Furthermore, if $E\in\ci_w$ then either $w\in E \subseteq H = L(E)$, or $w\in W = L(E)$, so $L$ avoids error in all worlds.
{\em Left to right.} Suppose that $H$ is not open. Then $H$ is true in some $w$, such that for all information $E$ true in $w$, $E$ does not entail $H$, i.e. there is $w\in H\cap \bdry(H)$. Suppose, for contradiction, that $L$ verifies $H$. Then $L(F)\subseteq H$, for some $F$ true in $w$. But, by assumption, there is $v\in F\cap H^{\sf c}$. So $L$ does not avoid error in $v$.  
\end{proof}

\begin{proof}[Proof of Theorem \ref{limverififfsigmatwo}]
{\em Left to right}. Suppose that $L(\cdot)$ is a limiting verifier of $H$. Let $${\cal T} = \{ E \in \ci : L(E) \subseteq H \}.$$
For each $E \in {\cal T}$, let ${\cal D}_E = \{ F \in \ci : F\subseteq E \text{ and } L(F) \nsubseteq L(E)\}$, and let $E' = \bigcup {\cal D}_E$. We claim that: $$H = \bigcup_{E\in{\cal T}} E\setminus E'.$$ To prove the claim, $w\in H$ iff there is $E\in \ci_w$ such that for all information states $F\subseteq E$, $L(F)\subseteq L(E)\subseteq H$ iff there is $E\in {\cal T}$ such that $w\in E\setminus E'$. Since $\ct \subseteq \ci$, and $\ci$ is countable, $H$ is expressed as a countable union of locally closed sets. If the topology is metrizable, every open set --- and therefore every locally closed set --- can be expressed as a countable union of closed sets. {\em Right to left.} Every countable union of locally closed sets can be expressed as a disjoint union of locally closed sets \cite[Proposition 3]{baltag2015topology}. Let $H = \sqcup_{i=1}^\infty O_i \setminus O_i'$ be a disjoint union, for $O_i, O_i'$ open. Let $\sigma(E)$ be the least $i$ such that $E\subseteq O_i$ and $E\nsubseteq O_i'$, if such an $i$ exists, and let $\sigma(E) = \omega$ otherwise. Let $L(E) = O_{\sigma(E)}\setminus O_{\sigma(E)}'$ if $\sigma(E) < \omega$, and let $L(E) = W$ otherwise. Suppose that $w\in H$. Let $k$ be the least integer such that $w\in O_k\setminus O_k'$. Then for $j<k$, either $w\notin O_j$ or $w\in O_j'$. For each $j<k$, let $E_j$ be an information state true in $w$ such that $E_j \subseteq O_j'$, if $w\in O_j'$, and let $E_j = W$, otherwise. Let $E_k$ be an information state true in $w$ that entails $O_k$. Finally, let $E=\bigcap_{j\leq k} E_k$. Then $L(F) = O_k \setminus O_k'$, for every $F\in \ci_w$ such that $F\subseteq E$. Finally, suppose that $w\notin H$. Then for each $j$, either $w\notin O_j$ or $w\in O_j'$. Suppose that $L(E)=O_i \setminus O_i'$, for some $i$. Then $w\in O_i'$. Let $F$ be an information state true in $w$ and entailing $O_i'$. Then $L(E\cap F) \nsubseteq L(E)$, because $L(E \cap F)$ is either $W \not\subseteq O_{i}\setminus O'_{i}$, or $L(E \cap F)$ is some $O_{j}\setminus O'_{j}$, which was chosen to be disjoint from $O_{i} \setminus O'_{i}$. 
\end{proof}

\begin{proof}[Proof of Theorem \ref{solvability}]
See the proof of Theorem 2 in \cite{genin2015choice}.
\end{proof}

\subsection{The Statistical Setting}
\subsubsection{The Sample Space}

The following Lemma states that is always feasible to perform logical operations (e.g. $\wedge$, $\vee$, and $\neg$) on feasible tests.

\begin{lemma}[Lemma 6.4 \cite{parthasarathy1967probability}]
\label{continuityalgebra}
The almost surely clopen sets in $\mu$, denoted $\cc(\mu)$, form an algebra. 
\end{lemma}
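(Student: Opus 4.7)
The plan is to verify the three axioms of an algebra of sets: $\Omega \in \cc(\mu)$, closure under complementation, and closure under finite unions. Since Borel sets themselves form a $\sigma$-algebra and every boundary $\bdry(A) = \cl(A) \cap \cl(A^{\sf c})$ is closed (hence Borel), I only need to track the measure of the boundaries under these operations.

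First I would dispatch the two easy cases. The whole space $\Omega$ is clopen in the topology on $\gs$, so $\bdry(\Omega) = \varnothing$ and $\mu(\bdry(\Omega)) = 0$. For complementation, the identity
\[
\bdry(A^{\sf c}) \;=\; \cl(A^{\sf c}) \cap \cl((A^{\sf c})^{\sf c}) \;=\; \cl(A^{\sf c}) \cap \cl(A) \;=\; \bdry(A)
\]
shows that $A$ and $A^{\sf c}$ have the same boundary, so $\mu(\bdry(A^{\sf c})) = \mu(\bdry(A)) = 0$ whenever $A \in \cc(\mu)$.

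The main step is closure under binary union (from which closure under intersection follows via De Morgan together with the previous case). The key set-theoretic fact I would establish is the inclusion
\[
\bdry(A \cup B) \;\subseteq\; \bdry(A) \cup \bdry(B).
\]
To prove it, suppose $x \in \bdry(A\cup B)$. Then $x \in \cl(A \cup B) = \cl(A) \cup \cl(B)$, so without loss of generality $x \in \cl(A)$. Moreover, $x \notin \intr(A \cup B)$ means every open neighborhood of $x$ meets $(A\cup B)^{\sf c} = A^{\sf c} \cap B^{\sf c}$, and hence meets $A^{\sf c}$ in particular; therefore $x \in \cl(A^{\sf c})$, giving $x \in \bdry(A)$. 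Once this inclusion is in hand, subadditivity of $\mu$ yields
\[
\mu\bigl(\bdry(A \cup B)\bigr) \;\leq\; \mu(\bdry(A)) + \mu(\bdry(B)) \;=\; 0,
\]
completing the proof.

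There is no real obstacle here: the argument is entirely topological-measure-theoretic bookkeeping, and the only substantive observation is the boundary inclusion for unions, which rests on $\cl(A\cup B) = \cl(A)\cup \cl(B)$ together with the dual fact that every neighborhood of an exterior point of $A\cup B$ meets each of $A^{\sf c}$ and $B^{\sf c}$. Since this is Parthasarathy's Lemma 6.4, I would simply record the argument above as a self-contained verification and cite the reference.
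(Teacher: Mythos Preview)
Your proof is correct and follows exactly the same route as the paper: show $\bdry(\Omega)=\varnothing$, note $\bdry(A)=\bdry(A^{\sf c})$, and use the inclusion $\bdry(A\cup B)\subseteq\bdry(A)\cup\bdry(B)$ together with subadditivity of $\mu$. The only difference is that you supply a short justification of the boundary inclusion, which the paper simply asserts.
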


\begin{proof}[Proof of Lemma \ref{continuityalgebra}]
One has that $\Omega \in \cc(\mu)$, since $\bdry(\Omega)=\varnothing$. Moreover, $\cc(\mu)$ is closed under complement, since $\bdry(A) = \bdry(A^{\sf c})$. Furthermore, since $\bdry(A\cup B) \subseteq \bdry(A) \cup \bdry(B)$, it follows that if $A,B\in \cc(\mu)$, then $\mu(\bdry(A\cup B))\leq \mu(\bdry(A) \cup \bdry (B)) \leq \mu(\bdry (A)) + \mu(\bdry (B)) = 0$. Therefore, $\cc(\mu)$ is closed under finite union as well.
\end{proof}
\noindent Hypothesis tests are often constructed to reject if the number of samples landing in a particular region exceeds some threshold. The following lemma states that such a test is $\mu$-feasible, if the region is almost surely clopen in $\mu$.

\begin{lemma}
\label{feasibletest}
Suppose that $A$ is almost surely clopen in $\mu$. Then: 
\[ \left\{(\omega_1, \ldots, \omega_n) : \sum_{i=1}^n \mathbb{1}[\omega_i \in A] \geq k\right\} \] 
is almost surely clopen in $\mu^n$, for $n\geq 1$, and $k\in \{0, \ldots n\}$.
\end{lemma}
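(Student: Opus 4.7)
The plan is to reduce the statement to Lemma \ref{continuityalgebra} by showing that the set in question lies in the algebra generated by the coordinate-wise preimages of $A$, each of which is almost surely clopen in $\mu^n$.

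First, for each $i \in \{1,\ldots,n\}$, let $\pi_i : \Omega^n \to \Omega$ be the projection. The set $B_k := \{(\omega_1,\ldots,\omega_n) : \sum_{i=1}^n \mathbb{1}[\omega_i\in A] \geq k\}$ can be expressed as a finite Boolean combination of the sets $\pi_i^{-1}(A)$, since ``at least $k$ coordinates lie in $A$'' is a quantifier-free Boolean condition over the atomic events $\omega_i\in A$. Explicitly,
\[
B_k \;=\; \bigcup_{\substack{S\subseteq\{1,\ldots,n\}\\ |S|\geq k}} \Bigl(\bigcap_{i\in S}\pi_i^{-1}(A)\Bigr)\cap \Bigl(\bigcap_{i\notin S}\pi_i^{-1}(A)^{\sf c}\Bigr).
\]
By Lemma \ref{continuityalgebra}, the almost surely clopen sets in $\mu^n$ form an algebra, so it suffices to show that each $\pi_i^{-1}(A)$ is almost surely clopen in $\mu^n$.

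Next I would establish the topological inclusion $\bdry(\pi_i^{-1}(A)) \subseteq \pi_i^{-1}(\bdry(A))$. This follows from continuity of $\pi_i$: since $\pi_i^{-1}$ takes open sets to open sets and closed sets to closed sets, one obtains $\pi_i^{-1}(\intr(A)) \subseteq \intr(\pi_i^{-1}(A))$ and $\cl(\pi_i^{-1}(A)) \subseteq \pi_i^{-1}(\cl(A))$, and subtracting gives the desired inclusion of boundaries. Then, because $\mu^n$ is a product measure,
\[
\mu^n\bigl(\pi_i^{-1}(\bdry(A))\bigr) \;=\; \mu(\bdry(A)) \;=\; 0,
\]
using the hypothesis that $A$ is almost surely clopen in $\mu$. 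Hence $\mu^n(\bdry(\pi_i^{-1}(A)))=0$, i.e.\ each $\pi_i^{-1}(A)$ is almost surely clopen in $\mu^n$.

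Combining these two observations, $B_k$ lies in the algebra generated by almost surely clopen sets in $\mu^n$, so by Lemma \ref{continuityalgebra} it is almost surely clopen in $\mu^n$, which is what was to be shown. The only mildly subtle step is the boundary-under-projection inclusion; everything else is a bookkeeping exercise on Boolean combinations and the product structure of $\mu^n$. The cases $k=0$ (where $B_k = \Omega^n$) and $k>n$ (where $B_k=\varnothing$) are trivially almost surely clopen and need no separate treatment.
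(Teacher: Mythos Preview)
Your proof is correct and follows essentially the same strategy as the paper: express the set as a finite Boolean combination of cylinder sets, invoke Lemma~\ref{continuityalgebra}, and then check that each cylinder is almost surely clopen in $\mu^n$. The only difference is in this last step: the paper proves $\mu^n(\bdry(\times_{j=1}^n B_{ij}))=0$ by induction on $n$ using $\bdry(X\times Y)\subseteq \bdry(X)\cup\bdry(Y)$, whereas you argue directly via continuity of the projections $\pi_i$ to get $\bdry(\pi_i^{-1}(A))\subseteq \pi_i^{-1}(\bdry(A))$. Both are standard and equally valid; your projection argument is marginally cleaner since it avoids the induction. As a minor cosmetic note, your Boolean decomposition into ``exactly $S$'' atoms is more elaborate than needed---the paper's simpler union over $k$-element subsets (without complementing the off-$S$ coordinates) already suffices---but this makes no difference once Lemma~\ref{continuityalgebra} is in hand.
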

\begin{proof}[Proof of Lemma \ref{feasibletest}]
Let $L_1, L_2, \ldots, L_{n\mathsf{C}k}$ enumerate all $k$-element subsets of $\{1,2, \ldots, n\}$. Then 
\[\{(\omega_1, \ldots, \omega_n) : \sum_{i=1}^n \mathbb{1}[\omega_i \in A] \geq k\} = \bigcup_{i=1}^{n\mathsf{C}k} \times_{j=1}^n B_{ij}, \]
where $B_{ij}=A$ if $j\in L_i$, and $B_{ij} = \Omega$ otherwise. Since the almost surely clopen sets in $\mu^n$ form an algebra, it suffices to show that $\times_{j=1}^n B_{ij}$ is an almost surely clopen set in $\mu^n$. Argue by induction on $n$. If $n=1$, then $\times_{j=1}^n B_{ij}$ is either $A$ or $\Omega$, which are both almost surely clopen sets in $\mu$. For the inductive step, note that $\bdry(\times_{j=1}^{n+1} B_{ij})\subseteq \bdry(\times_{j=1}^{n} B_{ij})\cup \bdry(B_{i,n+1})$. By the induction hypothesis, $\mu^{n+1}(\bdry(\times_{j=1}^{n+1} B_{ij})) \leq  \mu^n(\bdry(\times_{j=1}^{n} B_{ij}))+\mu(\bdry(B_{i,n+1}))= 0$.
\end{proof}
\subsubsection{The Weak Topology}
\noindent Billingsley \cite{billingsley} proves the following result about the product space:

\begin{lemma}[Theorem 2.8]
\label{product}
If $\ct = \ct' \times \ct''$ is metrizable and second-countable, then $\mu_n'\times\mu_n'' \Rightarrow \mu' \times \mu''$ iff $\mu_n' \Rightarrow \mu'$ and $\mu_n'' \Rightarrow \mu''$.
\end{lemma}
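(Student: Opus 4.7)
The plan is to handle the two implications with complementary tools: almost surely clopen (continuity) sets for the forward direction, and integrals of bounded continuous functions (Portmanteau) for the backward direction.

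For the forward direction, suppose $\mu_n' \times \mu_n'' \Rightarrow \mu' \times \mu''$ and let $A$ be almost surely clopen in $\mu'$. Since $\bdry(\Omega'') = \varnothing$, we have $\bdry(A \times \Omega'') \subseteq \bdry(A) \times \Omega''$, hence $(\mu' \times \mu'')(\bdry(A \times \Omega'')) \leq \mu'(\bdry(A)) \cdot \mu''(\Omega'') = 0$, so $A \times \Omega''$ is a continuity set of $\mu' \times \mu''$. Weak convergence then yields $\mu_n'(A) = (\mu_n' \times \mu_n'')(A \times \Omega'') \to (\mu' \times \mu'')(A \times \Omega'') = \mu'(A)$, proving $\mu_n' \Rightarrow \mu'$. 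The second factor is handled symmetrically by replacing $A \times \Omega''$ with $\Omega' \times B$.

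For the backward direction, assume $\mu_n' \Rightarrow \mu'$ and $\mu_n'' \Rightarrow \mu''$. By the Portmanteau theorem, it suffices to show $\int f \, d(\mu_n' \times \mu_n'') \to \int f \, d(\mu' \times \mu'')$ for every bounded continuous $f$ on $\Omega' \times \Omega''$. For tensor products $f(x,y) = g(x)h(y)$ with $g,h$ bounded continuous on the respective factors, Fubini gives
\[ \int g(x)h(y)\, d(\mu_n' \times \mu_n'') = \Bigl(\int g \, d\mu_n'\Bigr)\Bigl(\int h \, d\mu_n''\Bigr) \longrightarrow \Bigl(\int g \, d\mu'\Bigr)\Bigl(\int h \, d\mu''\Bigr), \]
so convergence holds on the algebra of finite linear combinations of such tensors.

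The main obstacle is extending this convergence from tensors to arbitrary bounded continuous $f$ on the product. The standard route exploits metrizability and second-countability to secure tightness of $\{\mu_n' \times \mu_n''\}$: since the marginals are weakly convergent, Prokhorov's theorem supplies, for each $\varepsilon > 0$, compacta $K' \subseteq \Omega'$ and $K'' \subseteq \Omega''$ with $\mu_n'(\Omega' \setminus K'), \mu_n''(\Omega'' \setminus K'') < \varepsilon$ uniformly in $n$. On the compact rectangle $K' \times K''$, Stone-Weierstrass makes finite linear combinations of tensors sup-norm dense in $C(K' \times K'')$, so $f$ can be uniformly approximated there by some $p$; extending $p$ to the whole product via Tietze (keeping $\|p\|_\infty \leq \|f\|_\infty$) and bounding the exterior contribution by $2\|f\|_\infty \varepsilon$ closes the argument as $\varepsilon \to 0$. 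Portmanteau then delivers $\mu_n' \times \mu_n'' \Rightarrow \mu' \times \mu''$.
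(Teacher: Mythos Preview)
The paper does not supply its own proof of this lemma; it simply quotes Theorem~2.8 of Billingsley and defers to that reference. Your forward direction is correct and standard.

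Your backward direction has a real gap. You invoke Prokhorov's theorem to extract compacta $K',K''$ with $\mu_n'(\Omega'\setminus K')<\varepsilon$ and $\mu_n''(\Omega''\setminus K'')<\varepsilon$ uniformly in $n$. But the implication ``weakly convergent $\Rightarrow$ tight'' is the converse half of Prokhorov and requires the underlying space to be \emph{Polish} (completely metrizable), not merely metrizable and second-countable as in the stated hypothesis. In a non-complete separable metric space a single probability measure need not be tight---take any nontrivial Borel probability on $\mathbb{Q}$ with its usual metric---so even a constant, hence trivially weakly convergent, sequence already violates your tightness claim. Without the compact rectangles $K'\times K''$, the Stone--Weierstrass/Tietze step cannot start.

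Billingsley's actual argument (the one the paper cites) bypasses tightness altogether. One checks that if $A'$ is a $\mu'$-continuity set and $A''$ a $\mu''$-continuity set, then $\bdry(A'\times A'')\subseteq(\bdry A'\times\cl A'')\cup(\cl A'\times\bdry A'')$, so $A'\times A''$ is a $(\mu'\times\mu'')$-continuity set and $(\mu_n'\times\mu_n'')(A'\times A'')=\mu_n'(A')\,\mu_n''(A'')\to\mu'(A')\,\mu''(A'')$. Such rectangles form a $\pi$-system, and separability (all but countably many radii at each point of a countable dense set give continuity balls) ensures every open set in the product is a countable union of them. The paper's own Lemma~\ref{separatingclass} (Billingsley's Theorem~2.2) then yields $\mu_n'\times\mu_n''\Rightarrow\mu'\times\mu''$ directly. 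If you add the assumption that the factors are Polish, your route does go through; under the hypotheses as stated, it does not.
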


\noindent Lemma \ref{product} entails that the product map $\mu\mapsto\mu^n$ is sequentially continuous, and therefore, continuous, for all natural $n$.\footnote{Continuity is relative to the weak topologies on $W$ and $W^n(\gs) = \{\mu^n : \mu \in W\}$. Recall that a function $f$ is sequentially continuous if whenever a sequence $(x_n)$ converges to a limit $x$, the sequence $f(x_n)$ converges to $f(x)$. In first-countable spaces, sequential continuity is equivalent to continuity. } Billingsley \cite{billingsley} also proves the following useful lemma:

\begin{lemma}[Theorem 2.2]
\label{separatingclass}
Suppose that $\ca\subseteq\cb$ is a $\pi$-system\footnote{$\ca$ is a $\pi$-system iff $A\cap B \in \ca$ whenever $A,B\in\ca$.}  and that every open set is a countable union of $\ca$ sets. If $\mu_n(A) \rightarrow \mu(A)$ for every $A$ in $\ca$, then $\mu_n \Rightarrow \mu$.
\end{lemma}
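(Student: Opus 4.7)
The plan is to bootstrap from pointwise convergence on $\ca$ up to convergence on all $\mu$-continuity sets via a standard Portmanteau-style sandwich, using the $\pi$-system hypothesis to handle finite unions through inclusion-exclusion. First I would lift the hypothesis from single $\ca$-sets to finite unions. Given $A_1,\ldots,A_k \in \ca$, inclusion-exclusion gives
\[
\mu_n(A_1 \cup \cdots \cup A_k) \;=\; \sum_{\varnothing \neq S \subseteq \{1,\ldots,k\}} (-1)^{|S|+1}\,\mu_n\Bigl(\bigcap_{i \in S} A_i\Bigr).
\]
Because $\ca$ is a $\pi$-system, every intersection $\bigcap_{i \in S} A_i$ lies in $\ca$, so by hypothesis each summand converges to $\mu(\bigcap_{i \in S} A_i)$. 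Since there are only finitely many summands, $\mu_n(A_1 \cup \cdots \cup A_k) \to \mu(A_1 \cup \cdots \cup A_k)$.

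Next I would pass to open sets. Given any open $G$, the second hypothesis lets me write $G = \bigcup_{k=1}^\infty A_k$ with $A_k \in \ca$. Setting $C_k = A_1 \cup \cdots \cup A_k$, we have $C_k \uparrow G$, so $\mu(C_k) \uparrow \mu(G)$ by continuity of measure. For each fixed $k$, $\mu_n(G) \geq \mu_n(C_k)$ gives $\liminf_n \mu_n(G) \geq \lim_n \mu_n(C_k) = \mu(C_k)$; letting $k \to \infty$ yields $\liminf_n \mu_n(G) \geq \mu(G)$. Taking complements and using that both $\mu_n$ and $\mu$ are probability measures, this is equivalent to $\limsup_n \mu_n(F) \leq \mu(F)$ for every closed $F$.

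Finally I would conclude convergence on almost surely clopen sets. If $\mu(\bdry A) = 0$, then $\mu(\intr A) = \mu(A) = \mu(\cl A)$, and the preceding inequalities sandwich
\[
\mu(A) = \mu(\intr A) \leq \liminf_n \mu_n(\intr A) \leq \liminf_n \mu_n(A) \leq \limsup_n \mu_n(A) \leq \limsup_n \mu_n(\cl A) \leq \mu(\cl A) = \mu(A),
\]
forcing $\mu_n(A) \to \mu(A)$, which is exactly $\mu_n \Rightarrow \mu$ under the paper's definition of weak convergence. The step I expect to be the main obstacle is the first: the inclusion-exclusion identity is only useful because the $\pi$-system property keeps every finite intersection inside $\ca$, which is precisely what allows finitely many applications of the hypothesis to control $\mu_n(C_k)$. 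Absent $\pi$-closure, convergence on a generating family need not transfer to the finite unions that approximate open sets from below, and the whole Portmanteau chain collapses.
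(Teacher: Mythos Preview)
Your argument is correct and is essentially the standard proof of Billingsley's Theorem~2.2: inclusion--exclusion on the $\pi$-system to control finite unions, monotone approximation of open sets from below to get $\liminf_n \mu_n(G)\geq \mu(G)$, and then the Portmanteau sandwich on $\mu$-continuity sets. The paper itself does not prove this lemma; it merely cites it from Billingsley, so there is no alternative in-paper argument to compare against.
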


\noindent The following is a consequence of Lemma \ref{separatingclass}.
\begin{lemma}
\label{countablebasis2}
Suppose that $\ci$ is a countable, almost surely clopen basis for $W$. Then the collection  $\{ \mu : \mu(A) > b \}$ for $A\in\ca$ and $b\in \mathbb{Q}$ is a countable sub-basis for the weak topology.
\end{lemma}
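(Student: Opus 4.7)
The plan is to prove Lemma \ref{countablebasis2} by reducing it to the already-established Lemma \ref{countablebasis}. Three things need to be verified: (i) each set $V_{A,b} := \{\mu : \mu(A) > b\}$ with $A \in \ca$, $b \in \mathbb{Q}$ is open in the weak topology; (ii) the collection of all such $V_{A,b}$ generates (as a sub-basis) the full weak topology; and (iii) the collection is countable.

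For (i), the plan is to show that the evaluation map $\mu \mapsto \mu(A)$ is continuous on $W$ whenever $A$ is almost surely clopen in every $\mu \in W$. By hypothesis, every basis element of $\ci$ is almost surely clopen in every $\mu \in W$, and by Lemma \ref{continuityalgebra} the almost surely clopen sets in $\mu$ form an algebra; hence every $A \in \ca$ inherits this property across all $\mu \in W$. The definition of weak convergence then yields $\mu_n(A) \to \mu(A)$ whenever $\mu_n \Rightarrow \mu$. Since $\gs$ is metrizable with a countable basis, the weak topology is metrizable and in particular first countable, so sequential continuity of $\mu \mapsto \mu(A)$ implies continuity. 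Therefore $V_{A,b}$, a preimage of a rational open ray, is open.

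For (ii), the plan is to rewrite each sub-basis element from Lemma \ref{countablebasis} as a finite intersection of sets of the form $V_{A,b}$. Since $\ca$ is an algebra, $A \in \ca$ implies $A^{\sf c} \in \ca$, and the identity $\mu(A) < b \iff \mu(A^{\sf c}) > 1 - b$ gives
\[ \{\mu : \mu(A) \in (a,b)\} \;=\; V_{A,a} \cap V_{A^{\sf c},\, 1-b}, \]
with $1 - b \in \mathbb{Q}$ whenever $b \in \mathbb{Q}$. Thus every Lemma \ref{countablebasis} sub-basis element is a finite intersection of $V_{A,b}$'s, so the collection in Lemma \ref{countablebasis2} generates a topology at least as fine as the weak topology; combined with (i), it generates exactly the weak topology. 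Finally, (iii) is immediate: $\ci$ countable implies $\ca$ countable, and $\mathbb{Q}$ is countable, so the collection $\{V_{A,b} : A \in \ca,\, b \in \mathbb{Q}\}$ is countable.

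No step presents a real obstacle; the only delicate point is the continuity argument in (i), which relies on the interplay between the definition of weak convergence, the almost-sure clopenness transferring from $\ci$ to $\ca$ via Lemma \ref{continuityalgebra}, and first countability of the weak topology under the metrizability hypothesis. Everything else is bookkeeping with the algebra $\ca$ and rational arithmetic.
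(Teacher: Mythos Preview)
Your argument is mathematically sound in isolation, but it is circular within the paper: Lemma~\ref{countablebasis} is not ``already-established.'' Look at the appendix---the paper's proof of Lemma~\ref{countablebasis} reads, in full, ``Immediate corollary of Lemma~\ref{countablebasis2}.'' So Lemma~\ref{countablebasis2} is the logically prior result, and you cannot invoke Lemma~\ref{countablebasis} to prove it. The fact that Lemma~\ref{countablebasis} is \emph{stated} earlier in the main text is just expository ordering; its proof depends on the very lemma you are trying to establish.

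The paper's actual proof proceeds directly: it shows that $\mu_n \Rightarrow \mu$ iff $\mu_n \to \mu$ in the topology generated by the proposed sub-basis. The forward direction is essentially your step~(i). The reverse direction is where the real work lies, and it uses Lemma~\ref{separatingclass} (Billingsley's Theorem~2.2): if $\mu_n(A) \to \mu(A)$ for every $A$ in a $\pi$-system that generates the open sets of $\gs$ as countable unions, then $\mu_n \Rightarrow \mu$. Your reduction sidesteps this by borrowing it from Lemma~\ref{countablebasis}, but that lemma has no independent proof here---its content \emph{is} the Billingsley argument, packaged after the fact. To make your approach non-circular you would need to supply an independent proof of Lemma~\ref{countablebasis}, and any such proof will require something like Lemma~\ref{separatingclass} anyway; at that point you have essentially reproduced the paper's argument.
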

\begin{proof}[Proof of Lemma \ref{countablebasis2}]
It is sufficient to show that $\mu_n \Rightarrow \mu$ iff the $\mu_n$ converge to $\mu$ in the topology generated by the sub-basis. {\em Left to right.} Suppose $\mu_n \Rightarrow \mu$. Let $E$ be open in the topology generated by the sub-basis. Suppose $\mu$ lies in $E$. Then there is a basic open set: \[B=\bigcap_{i=1}^k \{ \mu : \mu(A_i) > b_i \}, \] such that $\mu\in B\subseteq E$. Since $\ci$ is feasible for $W$, $\mu_n(B_i) \rightarrow \mu(B_i)$ for each $i$. Therefore, there exists $n_i$ such that $\mu_n \in \{\mu : \mu(A_i) > b_i\}$ for all $n\geq n_i$. Letting $m=\max\{n_1, \ldots, n_k\}$, it follows that $\mu_n \in B\subseteq E$ for all $n\geq m$. Therefore, the $\mu_n$ converge to $\mu$ in the topology generated by the sub-basis. {\em Right to left.} Suppose that the $\mu_n$ converge to $\mu$ in the topology generated by the sub-basis. Note that since $\ca$ is an algebra, the collection $\{\mu: \mu(A) \in (a,b)\}$ for $A \in \ca$, and $a,b \in \mathbb{Q}$ generates the same topology.\footnote{Notice that $\{\mu: \mu(A) \in (a,b)\}=\{\mu: \mu(A) > a\}\cap \{ \mu : \mu(A^{\sf c}) > 1-b\}.$}  Let $A_1, A_2, \ldots$ enumerate the elements of $\ca(\ci)$. Let $l_{ij}<\mu(A_i)< r_{ij}$ be rationals lying in $(\mu(A_i)-1/j,\,\mu(A_i)+1/j)$. Let $Z_{ij}$ denote the sub-basis element $\{ \nu : \nu(A_i) \in (l_{ij},r_{ij}) \}$. Let $f$ be a surjective function from $\mathbb{N}$ to $\mathbb{N}\times \mathbb{N}$. Let $U_k = Z_{f(k)}$. By assumption, for every $m\geq 1$, there is $n_0$ such that the $\mu_n$ lie in $\cap_{k=1}^m U_k$, for all $n\geq n_0$. So $\mu_n(A) \rightarrow \mu(A)$, for every $A\in\ca$. By Lemma \ref{separatingclass}, $\mu_n \Rightarrow \mu$. 
\end{proof}

\begin{proof}[Proof of Lemma \ref{countablebasis}]
Immediate corollary of Lemma \ref{countablebasis2}.
\end{proof}
\subsection{Statistical Verifiability}
\noindent First, a useful lemma.
\begin{lemma}
\label{asclosedunderconjdisj}
The almost surely verifiable propositions are closed under finite conjunctions, and countable disjunctions.
\end{lemma}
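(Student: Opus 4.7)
The plan is to prove both closure properties by explicit constructions from given almost sure $\alpha$-verifiers, using Lemma \ref{continuityalgebra} throughout to handle feasibility: since the almost surely clopen sets form an algebra, finite Boolean combinations of feasible tests are feasible.

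For finite conjunctions it suffices to treat the binary case $H = H_1 \cap H_2$. Given almost sure $\alpha$-verifiers $\{\lambda_n^1\}$ and $\{\lambda_n^2\}$ for $H_1$ and $H_2$, I would define $\lambda_n(\vec\omega) = H$ iff $\lambda_n^1(\vec\omega) = H_1$ and $\lambda_n^2(\vec\omega) = H_2$, else $\lambda_n(\vec\omega) = W$. Feasibility is immediate since $\lambda_n^{-1}(H) = (\lambda_n^1)^{-1}(H_1) \cap (\lambda_n^2)^{-1}(H_2)$, an intersection of almost surely clopen sets. For SV3, observe that $H^{\sf c} = H_1^{\sf c} \cup H_2^{\sf c}$, so any $\mu \in H^{\sf c}$ lies in some $H_i^{\sf c}$; since $\lambda_n^{-1}(H) \subseteq (\lambda_n^i)^{-1}(H_i)$, the summable bound transfers. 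For SV4, when $\mu \in H$, both $(\lambda_n^i)^{-1}(H_i)$ hold $\mu^\infty$-eventually almost surely, and $\liminf_n[A_n \cap B_n] \supseteq \liminf_n A_n \cap \liminf_n B_n$, so the intersection is a $\mu^\infty$-almost-sure event.

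For countable disjunctions $H = \bigcup_{i \geq 1} H_i$, given $\alpha > 0$, I would choose for each $i$ an almost sure $\alpha/2^i$-verifier $\{\lambda_n^i\}$ of $H_i$, and define the combined test by
\[
\lambda_n(\vec\omega) = H \text{ iff } \lambda_n^i(\vec\omega) = H_i \text{ for some } i \leq n,
\]
truncating to the first $n$ component verifiers so that $\lambda_n^{-1}(H)$ is a \emph{finite} union of almost surely clopen sets, hence almost surely clopen. For SV3 with $\mu \in H^{\sf c}$ (so $\mu \in H_i^{\sf c}$ for every $i$), swap the order of summation:
\[
\sum_{n=1}^\infty \mu^n[\lambda_n^{-1}(H)] \leq \sum_{n=1}^\infty \sum_{i \leq n} \mu^n[(\lambda_n^i)^{-1}(H_i)] = \sum_{i=1}^\infty \sum_{n \geq i} \mu^n[(\lambda_n^i)^{-1}(H_i)] \leq \sum_{i=1}^\infty \frac{\alpha}{2^i} \leq \alpha.
\]
For SV4, fix $\mu \in H$ and pick any $i_0$ with $\mu \in H_{i_0}$; for all $n \geq i_0$, $\lambda_n^{-1}(H) \supseteq (\lambda_n^{i_0})^{-1}(H_{i_0})$, so $\liminf_n \lambda_n^{-1}(H) \supseteq \liminf_n (\lambda_n^{i_0})^{-1}(H_{i_0})$, which is $\mu^\infty$-almost sure by SV4 for $\{\lambda_n^{i_0}\}$.

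The main obstacle is bookkeeping rather than anything deep: taking the full infinite disjunction of the $(\lambda_n^i)^{-1}(H_i)$ at each stage would destroy feasibility, because a countable union of almost surely clopen sets need not be almost surely clopen. Truncating the $i$-range to $i \leq n$ repairs feasibility while still capturing every $H_{i_0}$ from stage $i_0$ onward, and the geometric split $\alpha_i = \alpha/2^i$ is precisely what keeps the resulting double sum in SV3 convergent.
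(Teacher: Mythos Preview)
Your proposal is correct and follows essentially the same route as the paper: the same ``fire only if both fire'' conjunction test, the same truncated disjunction $\lambda_n^{-1}(H)=\bigcup_{i\leq n}(\lambda_n^i)^{-1}(H_i)$ with the geometric split $\alpha_i=\alpha/2^i$, and the same feasibility appeal to Lemma~\ref{continuityalgebra}. Your handling of SV4 for conjunction via $\liminf_n(A_n\cap B_n)\supseteq\liminf_n A_n\cap\liminf_n B_n$ is just the direct form of the paper's complement-and-$\limsup$ computation, and your double-sum swap for SV3 in the disjunction case is exactly the paper's bound written without the intermediate pass to the full infinite union.
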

\begin{proof}[Proof of Lemma \ref{asclosedunderconjdisj}]
Suppose that $A_1, A_2$ are a.s. verifiable. Let $\alpha>0$. Let $\{\lambda_n^i\}_{n\in\mathbb{N}}$ be such that $\{\lambda_n^i\}_{n\in\mathbb{N}}$ is an a.s. $\alpha$-verifier for $A_i$. Let $\lambda_n(\vec{\omega}) = A_1 \cap A_2$ if $\lambda_n^i(\vec{\omega}) = A_i$, for $i\in\{1,2\}$. By Lemma \ref{continuityalgebra}, $\lambda_n$ is feasible, for each $\mu\in W,\, n\in\mathbb{N}$. Suppose that $\mu \in A_1 \cap A_2$. Then: 

\begin{align*}
&\mu^\infty\left[\liminf_{n\rightarrow \infty} \lambda_n^{-1}(A_1\cap A_2)\right] = \\
&= \mu^\infty\left[\liminf_{n\rightarrow \infty} (\lambda_n^1)^{-1}(A_1) \cap (\lambda_n^2)^{-1}(A_2)\right] \\
&= 1 - \mu^\infty\left[\limsup_{n\rightarrow \infty} (\lambda_n^1)^{-1}(W) \cup (\lambda_n^2)^{-1}(W)\right]  \\
&=1 -  \mu^\infty\left[\limsup_{n\rightarrow \infty} (\lambda_n^1)^{-1}(W) \cup \limsup_{n\rightarrow \infty}(\lambda_n^2)^{-1}(W)\right]\\
&\geq 1 -  \mu^\infty\left[\limsup_{n\rightarrow \infty} (\lambda_n^1)^{-1}(W)\right]- \mu^\infty\left[\limsup_{n\rightarrow \infty}(\lambda_n^2)^{-1}(W)\right]\\  
&= -1 + \mu^\infty\left[\liminf_{n\rightarrow \infty} (\lambda_n^1)^{-1}(A_1)\right]+ \mu^\infty\left[\liminf_{n\rightarrow \infty}(\lambda_n^2)^{-1}(A_2)\right]\\
&=1.
\end{align*}
 
\noindent Suppose that $\mu \notin A_1 \cap A_2$. Without loss of generality, suppose $\mu \notin A_1$. Then: 

\begin{align*}
&\sum_{n=1}^\infty \mu^\infty\left[\lambda_n^{-1}(A_1 \cap A_2)\right] =\\
&= \sum_{n=1}^\infty\mu^\infty\left[(\lambda_n^1)^{-1}(A_1)\cap (\lambda_n^2)^{-1}(A_2) \right]\\
&\leq \sum_{n=1}^\infty \mu^\infty\left[(\lambda_n^1)^{-1}(A_1)\right]\leq \alpha.
\end{align*}

\noindent To show that the a.s. verifiable propositions are closed under countable union, suppose that $A_1, A_2, \ldots$ are a.s. verifiable. For $i \in \mathbb{N}$, let $\{\lambda_n^i\}_{n\in \mathbb{N}}$ be an a.s. $\alpha_i$-verifier for $A_i$ with $\alpha_i=\alpha/{2^i}$. Let $\lambda_n(\vec{\omega})= \bigcup_{i=1}^\infty A_i$ if $\lambda_n^i(\vec{\omega}) = A_i$ for some $i\in\{1,\ldots, n\}$, and let $\lambda_n(\vec{\omega})=W$ otherwise. By Lemma \ref{continuityalgebra}, $\lambda_n$ is feasible for each $\mu\in W, n\in\mathbb{N}$. Suppose that $\mu \in \bigcup_{i=1}^\infty A_i$. Then there exists $j\in\mathbb{N}$ such that $\mu \in A_j$. Furthermore:

\begin{align*}
\mu^\infty\left[\liminf_{n\rightarrow\infty} \lambda_n^{-1}(\cup_{i=1}^\infty A_i)\right]&= \mu^\infty\left[\liminf_{n\rightarrow\infty} \cup_{k\leq n}(\lambda_n^k)^{-1}(A_k)\right]\\
&\geq \mu^\infty\left[\liminf_{n\rightarrow\infty} (\lambda_n^j)^{-1}(A_j) \right]=1.
\end{align*}

\noindent Suppose that $\mu\notin \cup_{i=1}^\infty A_i$. Then:

\begin{align*}
\sum_{n=1}^\infty \mu^\infty\left[\lambda_n^{-1}(\cup_{i=1}^\infty A_i)\right]&=\sum_{n=1}^\infty \mu^\infty\left[\cup_{k=1}^n (\lambda_n^k)^{-1}(A_k) \right]\\
&\leq \sum_{n=1}^\infty \mu^\infty\left[\cup_{k=1}^\infty(\lambda_n^k)^{-1}(A_k) \right]\\
&\leq \sum_{k=1}^\infty \sum_{n=1}^\infty \mu^{\infty}\left[(\lambda_n^k)^{-1}(A_k) \right]\\
&\leq \sum_{k=1}^\infty \alpha/2^k = \alpha.
\end{align*}
\end{proof}

\begin{proof}[Proof of Theorem \ref{openisverifiable}]
1 implies 3. Suppose, for contradiction, that $H$ is not open, but that $\{\lambda_n\}_{n\in\mathbb{N}}$ is an $\alpha$-verifier in chance for $H$. Let $\mu\in H \cap \bdry H$.  Then there is a sequence of $\mu_n$ in $H^{\sf c}$ such that $\mu_n \Rightarrow \mu$. Since $\{\lambda_n\}_{n\in\mathbb{N}}$ is a verifier for $H$, there is a sample size $k$ such that $\mu^k(\lambda_k^{-1}(H)) > \alpha + \epsilon$. By Lemma \ref{product}, $\mu_n^k(\lambda_k^{-1}(H)) \rightarrow \mu^k(\lambda_k^{-1}(H))$. So there is a $\mu_m\in H^{\sf c}$ such that $\mu_m^k(\lambda_k^{-1}(H))>\alpha$. Contradiction.\\\\
3 implies 2.  By Lemmas \ref{countablebasis}, \ref{asclosedunderconjdisj}, it is sufficient to show that every element of the sub-basis: \[\left\{ \{ \mu : \mu(A) > b \} : A \in\ca(\ci), b\in \mathbb{Q} \right\}\] is a.s. verifiable. Let $B\in\ca(\ci)$, and let $H=\{ \mu : \mu(B) > b\}$, for some $b\in[0,1]\cap\mathbb{Q}$. Define the indicator random variable $\mathbb{1}_B:\Omega \rightarrow \{0,1\}$ by $\mathbb{1}_B(\omega)=1$ if $\omega\in B$, otherwise  $\mathbb{1}_B =0$.  Letting $t_n=\sqrt{\frac{1}{2n}\ln(\pi^2 n^2/6\alpha)}$, it follows from Hoeffding's inequality that:
\[ \mu^n\left[\sum_{i=1}^n \mathbb{1}_B(\omega_i) \geq n\left(\mu(B) + t_n\right)\right] \leq \frac{6\alpha}{\pi^2 n^2}. \]

\noindent Let $\lambda_n(\vec{\omega}) = H$ if $\sum_{i=1}^n \mathbb{1}_B(\omega_i) \geq \lceil{n(b+t_n)\rceil}$, and let $\lambda_n(\vec{\omega}) = W$ otherwise. By Lemmas \ref{feasibletest}, $\lambda^n$ is feasible for all $\mu^n$.  If $\mu\notin H$, then $b\geq\mu(B)$ and: 
\begin{align*}
\sum_{n=1}^\infty\mu^n\left[\lambda_n^{-1}(H)\right] &= \sum_{n=1}^\infty \mu^n\left[ \sum_{i=1}^n \mathbb{1}_B(\omega_i) \geq \lceil{n(b+t_n)\rceil}\right] \\
&\leq \sum_{n=1}^\infty \mu^n\left[ \sum_{i=1}^n \mathbb{1}_B(\omega_i) \geq n(\mu(B)+t_n)\right]\\
&\leq \sum_{n=1}^\infty \frac{6\alpha}{\pi^2 n^2} = \alpha.
\end{align*}
\noindent Furthermore, if $\mu\in H$, then since $\frac{1}{n} \sum_{i=1}^n \mathbb{1}_B(\omega_i)\overset{a.s.}{\rightarrow} \mathbb{E}[\mathbb{1}_B] = \mu(B)$, by the strong law of large numbers, and $t_n\rightarrow 0$, we have that $\frac{1}{n} \sum_{i=1}^n \mathbb{1}_B(\omega_i) - t_n \overset{a.s.}{\rightarrow} \mu(B)$. Therefore, $\mu^\infty\left[\underset{n\rightarrow \infty}{\liminf} \hspace{1pt} \lambda_n^{-1}(H)\right]=1$, as required. \\\\
\noindent 2 implies 1. Immediate from the definitions.
\end{proof}

\begin{proof}[Proof of Theorem \ref{limverif}]
1 entails 3. Suppose that $\{\lambda_n\}_{n\in\mathbb{N}}$ is a limiting verifier in chance of $H$. For every $H'\in {\sf rng}(\lambda_n)$, let:
\begin{align*}
{\sf trig}_n(H') &= \{\mu : \mu^n[\lambda_n^{-1}(H')] > \alpha \};\\
{\sf def}_n(H') &= \cup_{m> n} \{ \mu : \mu^n[\lambda_n^{-1}(H')^{\sf c}] > 1 - \alpha\}.
\end{align*}
\noindent Lemma \ref{product} and the feasibility of the $\lambda_n$ entail that ${\sf trig}_n(H')$ and ${\sf def}_n(H')$ are both open in the weak topology. Let $\alpha\in(0,1)$. We claim that: 
\begin{align*}
H &= \bigcup_{n=1}^\infty \bigcup_{H'\in {\sf rng}(\lambda_n)} {\sf trig}_n(H') \setminus {\sf def}_n(H').
\end{align*}
Observe that $\nu \in H$ iff there is $H'\subseteq H \text{ and } m\in\mathbb{N}$, such that for all $n\geq m, \text{ } \nu^n[\lambda_n^{-1}(H')]> \alpha$ iff $\nu\in {\sf trig}_n(H')\setminus {\sf def}_n(H').$ Therefore, proposition $H$ is a countable union of locally closed sets. Since the weak topology is metrizable, every open set --- and therefore every locally closed set --- can be expressed as a countable union of closed sets.\\

\noindent 3 entails 2. Suppose that $H = \cup_{i=1}^\infty C_i$ is a countable union of closed sets. By theorem \ref{openisverifiable}, for each $C_i$, there exists an a.s. statistical $\alpha$-verifier of its complement $\{\psi_n^i\}_{n\in\mathbb{N}}$. Let $\lambda_n(\vec{\omega}) = C_j$, where $j$ is the least integer in $1, \ldots, n$ such that $\psi^j_n(\vec{\omega}) = W$, if such a $j$ exists. Otherwise, let $\lambda_n(\vec{\omega}) = W$. Suppose that $\mu \in H$, and that $k$ is the least integer such that $\mu\in C_k \subseteq H$. Then:
\begin{align*}
&\mu^\infty\left[\liminf_{n\rightarrow\infty}\lambda_n^{-1}(C_k)\right] =\\
&= \mu^\infty\left[\liminf_{n\rightarrow \infty} \bigcap_{j<k} (\psi^j_n)^{-1}(C_j^{\sf c}) \cap (\psi^k_n)^{-1}(W)\right]\\
&= 1 - \mu^\infty\left[\limsup_{n\rightarrow\infty} \bigcup_{j<k} (\psi^j_n)^{-1}(W) \cup (\psi^k_n)^{-1}(C_k^{\sf c})\right]\\
&=1 - \mu^\infty\left[\bigcup_{j<k}\limsup_{n\rightarrow\infty}\hspace{1pt} (\psi^j_n)^{-1}(W) \cup \limsup_{n\rightarrow\infty} \hspace{1pt}(\psi^k_n)^{-1}(C_k^{\sf c})\right]\\ 
&\geq 1 - \sum_{j<k}\mu^\infty\left[\limsup_{n\rightarrow \infty} \hspace{1pt} (\psi_n^j)^{-1}(W)\right] - \mu^\infty\left[\limsup_{n\rightarrow \infty} \hspace{1pt} (\psi^k_n)^{-1}(C_k^{\sf c})\right].
\end{align*}

\noindent Since each $\{\psi_n^j\}_{n\in\mathbb{N}}$ is an a.s. $\alpha$-verifier of $C_j$, and $\mu \notin \cup_{j<k} C_j$, it follows that\\ $\mu^{\infty}\left[\limsup_{n\rightarrow \infty} (\psi_n^j)^{-1}(W)\right]=0$ for $j<k$. Therefore, 
\begin{align*}
\mu^\infty\left[\liminf_{n\rightarrow\infty}\lambda_n^{-1}(C_k)\right] &\geq 1 - \mu^\infty\left[\limsup_{n\rightarrow \infty} \hspace{1pt} (\psi^k_n)^{-1}(C_k^{\sf c})\right].
\end{align*}
\noindent Finally, since $\sum_{n=1}^\infty \mu^\infty\left[(\psi_n^k)^{-1}(C_k^{\sf c}) \right]\leq \alpha$,  by the Borel-Cantelli lemma, $$\mu^\infty\left[\limsup_{n\rightarrow \infty} \hspace{1pt} (\psi^k_n)^{-1}(C_k^{\sf c})\right]=0,$$
\noindent and $\mu^\infty\left[\liminf_{n\rightarrow\infty}\lambda_n^{-1}(C_k)\right]=1$, as required. Suppose that $\mu \notin H$. Then, for each $C_k$, $$\mu^\infty[\limsup_{n\rightarrow \infty} \lambda_n^{-1}(C_k)]\leq \mu^\infty[\limsup_{n\rightarrow \infty} (\psi_n^k)^{-1}(W)]=0.$$

\noindent 2 entails 1. Immediate from the definitions.
\end{proof}
\begin{proof}[Proof of Theorem \ref{solvableiffsigma2}]
1 entails 3. Suppose that the indexed set $\{\lambda_n\}_{n\in\mathbb{N}}$ is a solution in chance to $\cq$, and that $A$ is an answer to $\cq$. Then, $\{\lambda_n\}_{n\in\mathbb{N}}$ is a limiting verifier in chance of $A$. Therefore, by Theorem \ref{limverif}, $A$ is a countable union of closed sets in the weak topology. \\

\noindent 3 entails 2. Let $A_1, A_2, \ldots$ enumerate the answers to $\cq$. Suppose that each answer is a countable union of closed sets, i.e. that each $A_j$ in $\cq$ can be expressed as a countable union of closed sets $\cup_{i=1}^\infty C_{ij}$. By Theorem \ref{openisverifiable}, for each $C_{ij}$, there exists an almost sure $\alpha$-verifier $\{\psi^{ij}_n\}_{n\in\mathbb{N}}$ of its complement. Let $f$ be a surjective function from $\mathbb{N}$ to $\mathbb{N}\times\mathbb{N}$. Let $\lambda_n(\vec{\omega})=\cq(C_{f(k)})$, where $k$ is the least natural number in $\{1,\ldots, n\}$ such that $\psi_n^{f(k)}(\vec{\omega})=W$, if such a $k$ exists, and let $\lambda_n(\vec{\omega})=W$, otherwise. It is easy to see that each $\lambda_n$ is feasible by Lemma \ref{continuityalgebra}. Suppose that $\mu\in W$. Claim: $\mu^\infty[\liminf_{n\rightarrow\infty} \lambda_n^{-1}(\cq_\mu)]=1$. To establish the claim, let $k$ be the least natural number such that $\mu\in C_{f(k)}$. Then:
\begin{align*}
&\mu^\infty\left[\liminf_{n\rightarrow\infty} \lambda_n^{-1}(\cq_\mu)\right] \geq\\
&\geq \mu^\infty\left[\liminf_{n\rightarrow\infty} \hspace{1pt} \bigcap_{i<k} \left(\psi^{f(i)}_n\right)^{-1} \left(C_{f(i)}^{\sf c}\right) \cap \left(\psi_n^{f(k)}\right)^{-1}(W)\right]\\
&= 1 - \mu^\infty\left[\limsup_{n\rightarrow\infty} \hspace{1pt} \bigcup_{i<k} \left(\psi^{f(i)}_n\right)^{-1} \left(W\right) \cup \left(\psi_n^{f(k)}\right)^{-1}\left(C_{f(k)}^{\sf c}\right)\right]\\
&= 1 - \mu^\infty\left[ \bigcup_{i<k} \limsup_{n\rightarrow\infty} \left(\psi^{f(i)}_n\right)^{-1} \left(W\right) \cup \limsup_{n\rightarrow\infty} \left(\psi_n^{f(k)}\right)^{-1}\left(C_{f(k)}^{\sf c}\right)  \right]\\
&\geq 1 - \sum_{i<k} \mu^{\infty}\left[ \limsup_{n\rightarrow\infty} \left(\psi^{f(i)}_n\right)^{-1} \left(W\right)  \right]- \mu^\infty\left[ \limsup_{n\rightarrow\infty} \left(\psi_n^{f(k)}\right)^{-1}\left(C_{f(k)}^{\sf c}\right) \right].
\end{align*}
\noindent Since each $\{\psi_n^{f(i)}\}_{n\in\mathbb{N}}$ is an a.s. $\alpha$-verifier of $C_{f(i)}$, and $\mu \notin \cup_{i<k} C_{f(i)}$, it follows that\\ $\mu^{\infty}\left[\limsup_{n\rightarrow \infty} (\psi_n^{f(i)})^{-1}(W)\right]=0$ for $i<k$. Therefore, 

\[ \mu^\infty\left[\liminf_{n\rightarrow\infty} \lambda_n^{-1}(\cq_\mu)\right] \geq 1 -  \mu^\infty\left[ \limsup_{n\rightarrow\infty} \left(\psi_n^{f(k)}\right)^{-1}\left(C_{f(k)}^{\sf c}\right) \right]. \]

\noindent Finally, since $\sum_{n=1}^\infty \mu^\infty\left[\left(\psi_n^{f(k)}\right)^{-1}\left(C_{f(k)}^{\sf c}\right) \right]\leq \alpha$, the Borel-Cantelli lemma yields: $$\mu^\infty\left[\limsup_{n\rightarrow \infty} \hspace{1pt} \left(\psi^{f(k)}_n\right)^{-1}\left(C_{f(k)}^{\sf c}\right)\right]=0,$$
\noindent so $\mu^\infty\left[\liminf_{n\rightarrow\infty}\lambda_n^{-1}(\cq_\mu)\right]=1$, as required.
      \\

\noindent 2 entails 1. Immediate from the definitions.
\end{proof}

\end{document}